\documentclass{article}

\PassOptionsToPackage{numbers, compress}{natbib}
\usepackage[preprint]{neurips_2026}

\usepackage[utf8]{inputenc} % allow utf-8 input
\usepackage[T1]{fontenc}    % use 8-bit T1 fonts
\usepackage{hyperref}       % hyperlinks
\usepackage{url}            % simple URL typesetting
\usepackage{booktabs}       % professional-quality tables
\usepackage{amsfonts}       % blackboard math symbols
\usepackage{nicefrac}       % compact symbols for 1/2, etc.
\usepackage{microtype}      % microtypography
\usepackage{xcolor}         % colors

\title{Trust Region Policy Distillation}

\usepackage{amsmath}
\usepackage{amssymb}
\usepackage{mathtools}
\usepackage{amsthm}
\usepackage{graphicx}
\usepackage{colortbl}
\usepackage{array}
\usepackage{wrapfig}
\usepackage{algorithm}
\usepackage{algorithmic}
\usepackage{multirow}

\theoremstyle{plain}
\newtheorem{theorem}{Theorem}[section]

\theoremstyle{definition}
\newtheorem{definition}[theorem]{Definition}
\newtheorem{assumption}[theorem]{Assumption}
\theoremstyle{remark}
\newtheorem{remark}[theorem]{Remark}

\graphicspath{{figures/}}

\definecolor{0}{gray}{0.96}
\definecolor{1}{HTML}{F9F7ED}
\definecolor{2}{HTML}{f98e71}

\hypersetup{
	colorlinks=true, % 将链接的边框（丑陋的方框）替换为彩色文本
	linkcolor=2,     % 内部链接的颜色（如公式引用、图表引用、Section跳转）
	citecolor=2,     % 文献引用的颜色（如 [1], [2]）
	urlcolor=black,  % 外部 URL 链接的颜色
	filecolor=black  % 本地文件链接的颜色
}

\author{
	Zhengpeng Xie\thanks{Work done during an internship. $^{\dagger}$Project leader. $^{\ddagger}$Corresponding author.}$^{*}$ \qquad
	Li Lyna Zhang$^{\dagger}$ \qquad
	Zeke Xie$^{\ddagger}$ \qquad
	Mao Yang\\
	\texttt{zhengpengxie00@gmail.com} \qquad
	\texttt{li.zhang@xingyunzhili.com}\\
	\texttt{zekexie@hkust-gz.edu.cn} \qquad
	\texttt{maoyang@microsoft.com}
}

\begin{document}

\maketitle

\begin{figure*}[!h]
	\centering
	\includegraphics[scale=1.2]{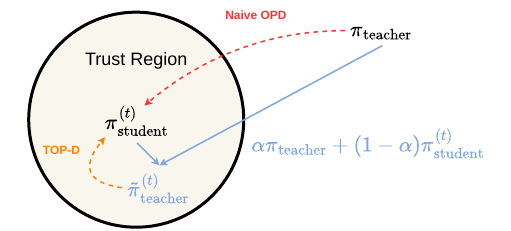}
	\caption{While standard OPD forces direct teacher supervision and suffers from unstable optimization, TOP-D ensures stable training by dynamically constructing a proximal teacher at every iteration.}
	\label{teaser}
\end{figure*}

\begin{abstract}
	Big goals are hard to achieve all at once; breaking them into small steps is wiser. We present Trust Region Policy Distillation (TOP-D), which transforms the notoriously unstable, high-variance On-Policy Distillation (OPD) into a stable training paradigm by dynamically constructing a \textit{proximal teacher}. Theoretically, we establish a rigorous framework demonstrating that TOP-D inherently controls gradient variance. By providing a formal global convergence analysis alongside a monotonic improvement bound, we mathematically formalize the reliability and stability of the overall training dynamics. Empirically, TOP-D dramatically enhances training stability, sample efficiency, and final performance on mathematical reasoning tasks. More importantly, TOP-D introduces zero additional computational overhead, positioning itself as a promising alternative to the well-established OPD paradigm.
\end{abstract}

\begin{figure*}[!h]
	\centering
	\includegraphics[scale=0.5]{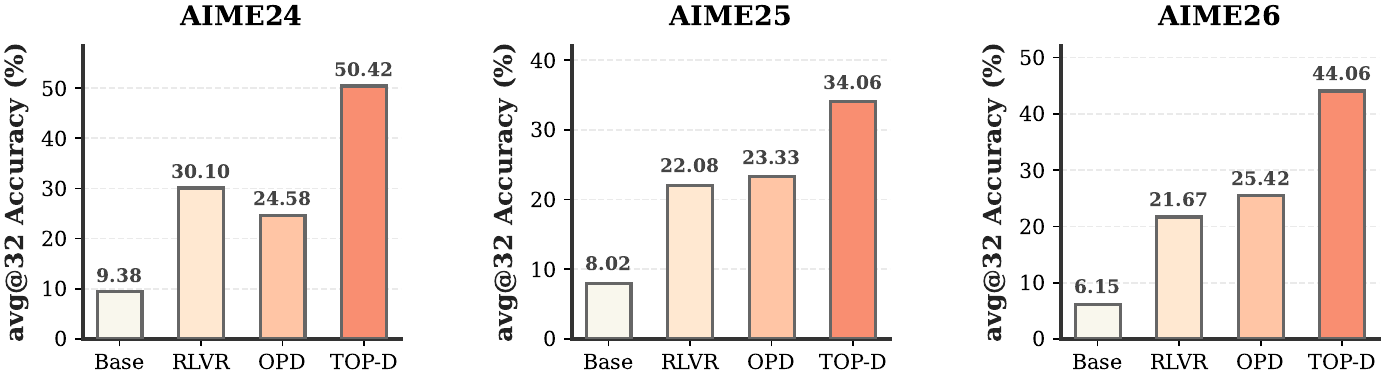}
	\caption{The avg@32 accuracy on several AIME benchmarks for RLVR applied to the Qwen3-8B-Base model, and for OPD and TOP-D using Qwen3-30B-A3B-Instruct-2507 as the teacher model.}
\end{figure*}

\section{Introduction}
On-Policy Distillation (OPD) \citep{lu2025onpolicydistillation} has rapidly emerged as the prevailing paradigm for post-training Large Language Models (LLMs) \citep{yang2025qwen3, xiao2026mimo, zeng2026glm, yang2026nemotron, deepseekai2026deepseekv4}, in which the student learns from token-level signals provided by an external teacher. Mathematically, this is equivalent to reinforcement learning where the immediate reward is defined by the logarithmic probability ratio between the teacher and student policies. Consequently, the on-policy nature of OPD prevents the catastrophic forgetting of Supervised Fine-Tuning (SFT) \citep{shenfeld2026self}, while its dense rewards overcome the sample inefficiency and instability of Reinforcement Learning with Verifiable Rewards (RLVR) (as summarized in Table \ref{comparison of different paradigms}).

Despite its theoretical appeal, standard OPD suffers from severe optimization fragility in practice. The fundamental bottleneck of OPD lies in the inherent capacity gap between the target teacher and the student \citep{jang2026stable}. Because the standard distillation objective is driven by the logarithmic probability ratio between the two policies, the optimization dynamics are highly vulnerable to teacher-student disagreements. Recent studies have devoted considerable engineering efforts to mitigate this instability through various techniques, including teacher-student mixed sampling \citep{gu2024minillm}, reward clipping \citep{ko2026scaling}, top-$p$ sampling \citep{fu2026revisiting}, off-policy cold starts \citep{li2026rethinking}, and scaling to full-vocabulary supervision \citep{deepseekai2026deepseekv4}. However, these mitigation strategies remain largely empirical heuristics that fail to capture the underlying mechanism of the instability: the unbounded logarithmic probability difference between the teacher and student. Furthermore, these empirical methods inherently lack theoretical guarantees for reliable optimization.

In this paper, we present Trust Region Policy Distillation (TOP-D) to systematically resolve the inherent optimization instability of standard OPD. Specifically, to overcome the unbounded logarithmic signals, we dynamically construct an \textit{external proximal teacher} (as illustrated in Figure \ref{teaser} and Section \ref{External Proximal Teacher}) by interpolating the probability distributions of the target teacher and the current student. This mathematical construction transforms the unbounded distillation reward into a smooth, strictly lower-bounded signal, analytically preventing variance explosion. Building upon this stabilized objective, we further incorporate \textit{internal trust region iterations} (described in Section \ref{Internal Trust Region Iterations}) to better approximate the current proximal teacher, while simultaneously improving overall sample efficiency.

Crucially, while the proximal teacher is constructed conceptually, the practical objective mathematically reduces to a simple \textit{algebraic transformation} of the standard token-level OPD reward. As a result, TOP-D introduces zero additional computational overhead, serving as a plug-and-play module that is natively compatible with both state-of-the-art distributed training infrastructures \citep{shoeybi2019megatron, zhao2023pytorch} and high-throughput inference engines \citep{kwon2023efficient, zheng2024sglang}. Beyond this empirical elegance, our proposed framework establishes a rigorous theoretical closed-loop: while the external proximal teacher strictly bounds the gradient variance (described in Section \ref{Bounded Variance}), the internal trust region iterations guarantee monotonic policy improvement (described in Section \ref{Monotonic Improvement}), systematically minimizing the single-step optimization error to tightly close the overall asymptotic convergence gap (described in Section \ref{Convergence Analysis}).

In summary, the main contributions of this paper are as follows:
\begin{itemize}
	\item We propose TOP-D, a plug-and-play distillation paradigm that smooths the unbounded reward signal and safely breaks the strict on-policy data-reuse barrier of standard OPD.
	\item We establish a rigorous theoretical framework where the proximal teacher inherently bounds gradient variance to stabilize training, while our convergence analysis and monotonic improvement bound form a closed-loop to systematically minimize the asymptotic error.
	\item We empirically demonstrate that TOP-D dramatically improves reasoning capabilities on mathematical reasoning tasks, delivering a 25.84\% absolute improvement in avg@32 accuracy over standard OPD on AIME24 and consistently dominating RLVR and OPD baselines.
\end{itemize}

\begin{table}[!t]
	\centering
	\renewcommand{\arraystretch}{1.2}
	\resizebox{\linewidth}{!}{
		\begin{tabular}{c c c c c}
			\toprule
			Method & Sampling & Reward Signal & Training Stability & Theoretical Guarantee \\
			\midrule
			
			Supervised Fine-Tuning 
			& off-policy & \textbf{dense} & \textbf{high} & weak \\
			
			\rowcolor{1}
			Reinforcement Learning with Verifiable Rewards
			& \textbf{on-policy} & sparse & low & weak \\
			
			On-Policy Distillation
			& \textbf{on-policy} & \textbf{dense} & low & weak \\
			
			\rowcolor{1}
			Trust Region Policy Distillation 
			& \textbf{on-policy} & \textbf{dense} & \textbf{high} & \textbf{strong} \\
			
			\bottomrule
		\end{tabular}
	}
	\caption{Comparison of different paradigms.}
	\label{comparison of different paradigms}
\end{table}

\section{Preliminaries}
\subsection{On-Policy Distillation}\label{On-Policy Distillation}
Suppose we have an autoregressive language model $\pi_{\theta}$, where $\theta\in\Theta$ denotes the parameters. Given a prompt $x\sim\mathcal{D}_x$, the model generates response $y=(y_0,y_1,\dots,y_T)$\footnote{Here, $y_0$ denotes the special initial token and is not counted toward the response length.} autoregressively, where $T=\lvert y\rvert$ is the response length. Formally,
\begin{equation}
	\log\pi_{\theta}(y\mid x)=\sum_{t=1}^{\lvert y\rvert}\log\pi_{\theta}(y_t\mid x,y_{<t}),
\end{equation}
where $y_{<t}$ denotes $(y_0,y_1,\dots,y_{t-1})$. In the context of OPD, we aim to minimize the following \textit{reverse} KL divergence:
\begin{equation}\label{objective}
	\begin{split}
		\theta^*=\arg\min_{\theta}\mathcal{L}(\theta)&=\arg\min_{\theta}\mathbb{E}_{x\sim\mathcal{D}_x}\left[\mathcal{D}_{\mathrm{KL}}(\pi_{\theta}(\cdot\mid x),\pi^*(\cdot\mid x))\right]\\
		&=\arg\max_{\theta}\mathbb{E}_{x\sim\mathcal{D}_x,y\sim\pi_{\theta}(\cdot\mid x)}\left[\log\frac{\pi^*(y\mid x)}{\pi_{\theta}(y\mid x)}\right]=\arg\max_{\theta}J(\theta),\\
	\end{split}
\end{equation}
where $\pi^*$ is typically a stronger model. We are now ready to present the standard \textit{policy gradient} form of the objective \eqref{objective}:
\begin{equation}\label{policy gradient}
	\nabla_{\theta}J(\theta)=\mathbb{E}_{x\sim\mathcal{D}_x,y\sim\pi_{\theta}(\cdot\mid x)}\left[\sum_{t=1}^{\lvert y\rvert}\nabla_{\theta}\log\pi_{\theta}(y_t\mid x,y_{<t})\cdot\left(\sum_{k=t}^{\lvert y\rvert}\log\rho_k\right)\right],
\end{equation}
where $\rho_k=\frac{\pi^*(y_k\mid x,y_{<k})}{\pi_{\theta}(y_k\mid x,y_{<k})}$. In other words, we can treat $\log\rho_k$ as the immediate reward for the $k$-th token, i.e., $r_k=\log\rho_k$. However, the cumulative reward $\sum_{k=t}^{\lvert y\rvert}r_k$ introduces an inherent length bias as well as high variance. In practice, a common approach is to use the immediate reward $r_k$ directly.

\subsection{Reinforcement Learning Formulation}\label{Reinforcement Learning Formulation}
To establish a rigorous theoretical foundation, we cast autoregressive language modeling as a deterministic Markov Decision Process (MDP) defined by the tuple $\left(\mathcal{S},\mathcal{A},\mathcal{P},r,\mu_0\right)$.
\begin{itemize}
	\item \textbf{State space $\mathcal{S}$ and action space $\mathcal{A}$.} State $s_t = (x, y_{<t})$ represents the prompt and prefix. The action space $\mathcal{A} = \mathcal{V} $ includes the vocabulary and the \texttt{EOS} token. To rigorously handle variable-length sequences, we augment $\mathcal{S}$ with a permanent \textit{absorbing state} $s_\bot$.
	\item \textbf{Dynamics $\mathcal{P}$ and initial distribution $\mu_0$.} The initial state distribution $\mu_0(s)$ follows the prompt distribution $\mathcal{D}_x$ and the special initial token $y_0$. Transitions are \textit{deterministic}: selecting $a \neq \texttt{EOS}$ appends $a$ to the prefix, while selecting $\texttt{EOS}$ definitively transitions to $s_\bot$.
	\item \textbf{Reward function $r$.} The immediate reward is the distillation signal: $r(s, a) = \log \frac{\pi^*(a \mid s)}{\pi(a \mid s)}$. To prevent infinite accumulation after termination\footnote{Assume that $\pi(\texttt{EOS}\mid s_\bot)=1$.}, we set $r(s_\bot, a) \equiv 0$ for any $a\in\mathcal{A}$.
\end{itemize}

Let $\pi: \mathcal{S} \to \Delta(\mathcal{A})$ be a stochastic policy. Operating in the undiscounted setting ($\gamma=1$), the action value function $Q^\pi(s, a)$ is defined as $Q^\pi(s, a) = \mathbb{E}_{\pi} \left[ \sum_{k=0}^{\infty} r(s_{t+k}, a_{t+k})\mid s_t=s, a_t=a \right]$. The state value function $V^\pi(s)$ is strictly obtained by marginalizing $Q^\pi$ over the policy: $V^\pi(s) = \mathbb{E}_{a \sim \pi(\cdot \mid s)} \left[ Q^\pi(s, a) \right]$. Consequently, we define the advantage function, which forms the basis for policy improvement bounds, as $A^\pi(s, a) = Q^\pi(s, a) - V^\pi(s)$. To decouple sequential dependencies, we introduce the \textit{unnormalized} state distribution $d^\pi(s) = \sum_{t=1}^{\infty}\mathbb{P}(s_t = s \mid \pi)$. The true objective of a policy $\pi$ is the expected cumulative reward over the infinite horizon:
\begin{equation}\label{original eta}
	\eta(\pi) = \mathbb{E}_{x \sim \mathcal{D}_x, y \sim \pi(\cdot \mid x)} \left[ \sum_{t=1}^{\infty} r(s_t, a_t) \right].
\end{equation}

While this abstract MDP formalization is adopted for the theoretical proofs in Section \ref{Monotonic Improvement}, the remainder of the paper defaults to the notation established in Section \ref{On-Policy Distillation}.

\section{Methodology}
This section details the design of our core algorithm, Trust Region Policy Distillation (TOP-D). We begin in Section \ref{External Proximal Teacher} by constructing a \textit{proximal teacher} to stabilize training. Subsequently, in Section \ref{Internal Trust Region Iterations}, the current student approximates this proximal teacher by performing \textit{trust region iterations}. Finally, Section \ref{Trust Region Policy Distillation} outlines the practical implementation of the TOP-D algorithm.

\subsection{External Proximal Teacher}\label{External Proximal Teacher}
The fundamental challenge in standard On-Policy Distillation (OPD) arises from the inherent capacity gap between the student policy $\pi_{\theta}$ and the target teacher policy $\pi^*$. As defined in Equation \eqref{policy gradient}, the optimization dynamics heavily rely on the probability ratio $\rho_k=\frac{\pi^*(y_k\mid x,y_{<k})}{\pi_{\theta}(y_k\mid x,y_{<k})}$. In extreme cases where the teacher assigns a near-zero probability to a generated token ($\pi^*\rightarrow0$), the immediate reward $r_k=\log\rho_k$ diverges to negative infinity. This unbounded penalty destabilizes the policy gradient, making the optimization process extremely fragile.

\begin{wrapfigure}{r}{0.38\textwidth}
	\centering
	\includegraphics[width=0.35\textwidth]{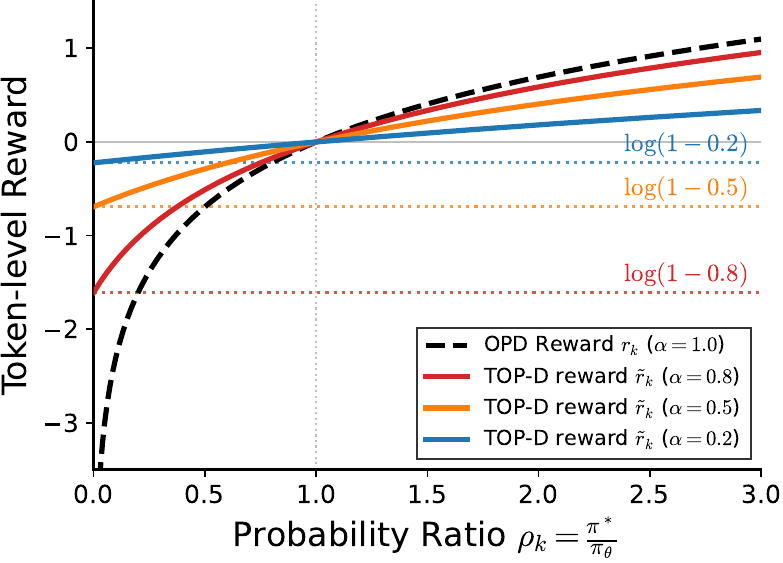}
	\caption{Reward curves of $r_k$ and $\tilde{r}_k$.}
	\label{reward comparison}
\end{wrapfigure}

To address this instability, we draw inspiration from \textit{trust region} methods in reinforcement learning \citep{schulman2015trust, schulman2017proximal, xie2025simple}, which restrict policy updates from taking destructively large steps. Rather than forcing the student to perfectly imitate a rigid teacher, we dynamically construct an intermediate, localized target distribution. We term this surrogate target the \textit{proximal teacher} $\tilde{\pi}^*$. Formally,
\begin{equation*}
	\tilde{\pi}^*(y_k\mid x,y_{<k})=\alpha\pi^*(y_k\mid x,y_{<k})+(1-\alpha)\pi_{\theta}(y_k\mid x,y_{<k}),
\end{equation*}
where $\alpha\in(0,1)$ is the coefficient that controls the interpolation strength. In fact, the proximal teacher inherently smooths the token-level reward $r_k$. Specifically,
\begin{equation}
	\tilde{r}_k=\log\frac{\tilde{\pi}^*(y_k\mid x,y_{<k})}{\pi_{\theta}(y_k\mid x,y_{<k})}=\log\left(\alpha\frac{\pi^*(y_k\mid x,y_{<k})}{\pi_{\theta}(y_k\mid x,y_{<k})}+1-\alpha\right)=\log\left(\alpha\rho_k+1-\alpha\right).
\end{equation}
When $\alpha=1$, it reduces to the original OPD reward $r_k$. Thanks to mathematics, we do not need to explicitly construct a proximal teacher to obtain the new reward $\tilde{r}_k$. Figure \ref{reward comparison} visualizes the OPD reward $r_k$ and our proposed reward $\tilde{r}_k$ with respect to the probability ratio $\rho_k$. We can see that $\tilde{r}_k$ is strictly bounded from below\footnote{$\log(\alpha x+1-\alpha)\geq\log(1-\alpha)$ when $x\in(0,+\infty)$ and $\alpha\in(0,1)$.} as $\rho_k\rightarrow0$. Moreover, we explicitly justify our choice of interpolating in the \textit{probability space} instead of the \textit{log-probability space} to construct the proximal teacher, since setting $\log\tilde{\pi}^*(y_k\mid x,y_{<k})=\alpha\log\pi^*(y_k\mid x,y_{<k})+(1-\alpha)\log\pi_{\theta}(y_k\mid x,y_{<k})$ yields
\begin{equation}
	\tilde{r}_k=\alpha r_k,
\end{equation}
which is merely a naive scaling of the original OPD reward.

\subsection{Internal Trust Region Iterations}\label{Internal Trust Region Iterations}
With the proximal teacher established, the student policy is optimized to maximize the expected cumulative reward $\tilde{R}_t=\sum_{k=t}^{\lvert y\rvert}\tilde{r}_k$. While standard OPD relies on continuous online sampling, it fundamentally operates in a strictly on-policy manner. In the naive formulation, once the student policy is updated, the previously generated trajectories are immediately discarded. This results in notoriously low sample efficiency, lacking the capacity to reuse data.

Inspired by modern trust region algorithms \citep{schulman2015trust, schulman2017proximal, xie2025simple}, we analogously decouple the \textit{behavior policy} $\pi_{\theta_{\mathrm{old}}}$ and the \textit{target policy} $\pi_{\theta}$ to inherit their high sample efficiency. Specifically,
\begin{equation}
	J(\theta)=\mathop{\mathbb{E}}_{\substack{x\sim\mathcal{D}_x\\\{y^{i}\}_{i=1}^{G}\sim\pi_{\theta_{\mathrm{old}}}(\cdot\mid x)}}\left[\frac{1}{\sum_{i=1}^{G}\lvert y^{i}\rvert}\sum_{i=1}^{G}\sum_{t=1}^{\lvert y^{i}\rvert}\left\{\min\left[p_t^i\cdot\hat{A}_t^i,\mathrm{clip}\left(p_t^i,1-\epsilon,1+\epsilon\right)\cdot\hat{A}_t^i\right]\right\}\right].
\end{equation}
Here, we follow the group-based reinforcement learning settings \citep{shao2024deepseekmath, guo2025deepseek}, where the superscript $i$ indexes different model responses. $p_t^i=\frac{\pi_{\theta}(y_t^i\mid x,y_{<t}^i)}{\pi_{\theta_{\mathrm{old}}}(y_t^i\mid x,y_{<t}^i)}$ is the importance sampling ratio. We adopt a more fine-grained token-level normalized advantage $\hat{A}_t^i$ to better exploit dense reward signals, as illustrated in Figure \ref{advantage}. For each prompt, the advantage relies on a token-level reward $\tilde{R}_t$, which combines the immediate reward and a length-normalized future return \citep{gu2024minillm}, i.e.,
\begin{equation}
	\tilde{R}_k^i=\tilde{r}_k^i+\frac{1}{\lvert y^i\rvert-k}\sum_{j=k+1}^{\lvert y^i\rvert}\tilde{r}_j^i.
\end{equation}
We empirically observe that this prevents the model from generating overly short or excessively long responses. The mean $\mu$ and standard deviation $\sigma$ within each group are calculated via $\mu=\mathrm{mean}\{\tilde{R}_k^i\}$ and $\sigma=\mathrm{std}\{\tilde{R}_k^i\}$, where $i=1,\dots,G$ and $k=1,\dots,\lvert y^i\rvert$.

\begin{figure*}[!t]
	\centering
	\includegraphics[scale=0.7]{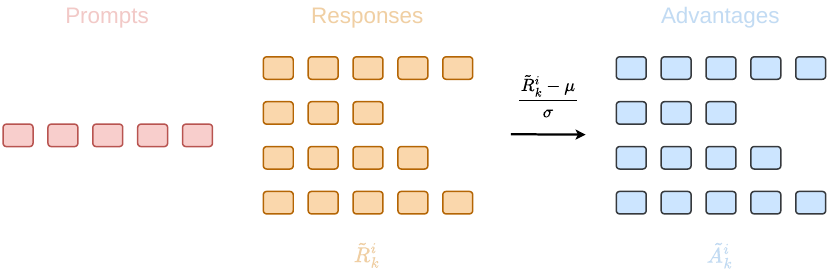}
	\caption{In contrast to \textit{sequence-level} advantage normalization, we perform \textit{token-level} normalization across the responses generated for a given prompt to better leverage the dense reward signal.}
	\label{advantage}
\end{figure*}

\subsection{Trust Region Policy Distillation}\label{Trust Region Policy Distillation}
In this section, we synthesize the concepts of the \textit{external proximal teacher} in Section \ref{External Proximal Teacher} and the \textit{internal trust region iterations} in Section \ref{Internal Trust Region Iterations} to formulate our complete algorithmic framework, Trust Region Policy Distillation (TOP-D), summarized in Algorithm \ref{topd}.

Standard OPD suffers from two fundamental bottlenecks: optimization fragility and low sample efficiency. TOP-D systematically resolves both through a unified design. First, the \textit{external proximal teacher} transforms the unbounded OPD reward into a smooth, lower-bounded reward $\tilde{r}_k$, intrinsically stabilizing the optimization process. Built upon this principle, the \textit{internal trust region iterations} decouple the behavior and target policies, breaking the strict on-policy data-reuse barrier. Crucially, these mechanisms are not merely empirical heuristics but share deep theoretical synergy. We rigorously formalize this framework and establish its mathematical guarantees in the Section \ref{Theoretical Analysis}.

\begin{algorithm}[!t]
	\caption{Trust Region Policy Distillation (TOP-D)}
	\label{topd}
	\begin{algorithmic}[1]
		\REQUIRE Student $\pi_{\theta}$, teacher $\pi^*$, dataset $\mathcal{D}_x$, interpolation $\alpha$, group size $G$, internal epochs $E$
		\WHILE{not converged}
		\STATE Initialize $\pi_{\theta_{\mathrm{old}}} \leftarrow \pi_{\theta}$
		\STATE Sample a batch of prompts $\mathcal{X}\subset\mathcal{D}_x$
		\FOR{each prompt $x\in\mathcal{X}$}
		\STATE Perform rollout to sample $G$ responses $\{y^{i}\}_{i=1}^{G}\sim\pi_{\theta_{\mathrm{old}}}(\cdot\mid x)$
		\STATE Compute TOP-D rewards $\tilde{r}_k^i\leftarrow\log\left(\alpha\frac{\pi^*(y_k^i\mid x,y_{<k}^i)}{\pi_{\theta_{\mathrm{old}}}(y_k^i\mid x,y_{<k}^i)}+1-\alpha\right)$
		\STATE Compute token-level returns $\tilde{R}_k^i\leftarrow\tilde{r}_k^i+\frac{1}{\lvert y^i\rvert-k}\sum_{j=k+1}^{\lvert y^i\rvert}\tilde{r}_j^i$
		\STATE Compute token-level advantages $\hat{A}_k^i\leftarrow\frac{\tilde{R}_k^i-\mu}{\sigma}$
		\ENDFOR
		\FOR{epoch $e = 1$ to $E$}
		\STATE $J(\theta)\leftarrow\frac{1}{\sum_{i=1}^{G}\lvert y^{i}\rvert}\sum_{i=1}^{G}\sum_{t=1}^{\lvert y^{i}\rvert}\left\{\min\left[p_t^i\cdot\hat{A}_t^i,\mathrm{clip}\left(p_t^i,1-\epsilon,1+\epsilon\right)\cdot\hat{A}_t^i\right]\right\}$
		\STATE Update $\theta$ by maximizing $J(\theta)$ using an optimizer
		\ENDFOR
		\ENDWHILE
	\end{algorithmic}
\end{algorithm}

\section{Theoretical Analysis}\label{Theoretical Analysis}
In this section, we establish a rigorous theoretical foundation for TOP-D. We begin in Section \ref{Bounded Variance} by proving that the proximal teacher inherently bounds gradient variance to stabilize optimization. Building upon this, we derive a global convergence bound in Section \ref{Convergence Analysis} by abstracting the learning dynamics. Finally, to minimize the local error bottleneck identified in this convergence analysis, we establish the monotonic improvement guarantee for our internal trust region iterations in Section \ref{Monotonic Improvement}.

\subsection{Bounded Variance}\label{Bounded Variance}
To rigorously analyze the gradient stability, we restrict our discussion to the autoregressive generation environment over a finite vocabulary $\mathcal{V}$. In policy gradient methods, the variance of the gradient is typically dominated by its \textit{second moment}, i.e., $\mathrm{Var}\left[g\right]\leq\mathbb{E}\left[\lVert g\rVert^2\right]$. Thus, our theoretical analysis will primarily focus on establishing an absolute upper bound for the gradient's second moment.
\begin{assumption}\label{assumption 1}
	We assume that the gradient of the student policy's log-probability (the score function) is uniformly bounded. That is, there exists a constant $M>0$, such that for any prompt $x\sim\mathcal{D}_x$, generated response $y\sim\pi(\cdot\mid x)$, and $k$-th token $y_k$, we have
	\begin{equation}
		\lVert\nabla_{\theta}\log\pi_{\theta}(y_k\mid x,y_{<k})\rVert\leq M.
	\end{equation}
\end{assumption}
Let $g_k=\nabla_{\theta}\log\pi_{\theta}(y_k\mid x,y_{<k})\cdot r_k$ denote the token-level policy gradient estimator for standard OPD, and $\tilde{g}_k=\nabla_{\theta}\log\pi_{\theta}(y_k\mid x,y_{<k})\cdot\tilde{r}_k$ denote the gradient estimator for our proposed TOP-D.
\begin{theorem}\label{bounded variance}
	Under Assumption \ref{assumption 1}, for any interpolation coefficient $\alpha\in(0,1)$, the variance of the TOP-D gradient estimator $\tilde{g}_k$ does not diverge across the entire probability space. Instead, it is strictly bounded by a uniform absolute limit:
	\begin{equation}
		\mathrm{Var}(\tilde{g}_k)\leq M^2\lvert\mathcal{V}\rvert\max\left\{\left(\log(1-\alpha)\right)^2, C^*\alpha\right\},
	\end{equation}
	where $C^*$ is a universal mathematical constant, and $\lvert\mathcal{V}\rvert$ denotes the size of the vocabulary $\mathcal{V}$.
\end{theorem}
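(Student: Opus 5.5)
The plan is to bound the variance by the second moment, $\mathrm{Var}(\tilde g_k)\le\mathbb{E}[\lVert\tilde g_k\rVert^2]$, and then bound the second moment one token at a time. Fix the state $s=(x,y_{<k})$ and write $\pi(a)=\pi_\theta(a\mid s)$, $\pi^*(a)=\pi^*(a\mid s)$ and $\rho(a)=\pi^*(a)/\pi(a)$. By Assumption \ref{assumption 1},
\begin{equation*}
	\mathbb{E}_{a\sim\pi}\bigl[\lVert\tilde g_k\rVert^2\bigr]\le M^2\sum_{a\in\mathcal{V}}\pi(a)\bigl(\log(\alpha\rho(a)+1-\alpha)\bigr)^2 .
\end{equation*}
It then suffices to show that every summand is at most $\max\{(\log(1-\alpha))^2,C^*\alpha\}$. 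Summing over the $\lvert\mathcal{V}\rvert$ tokens gives the factor $\lvert\mathcal{V}\rvert$. Since the bound holds uniformly in the state, it survives the outer expectation over $x$ and $y_{<k}$.

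For each token I would split into two cases according to the sign of the smoothed reward.

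\textbf{Case $\rho(a)\le1$.} Here $\alpha\rho+1-\alpha\in[1-\alpha,1]$, so the reward is nonpositive and bounded below by $\log(1-\alpha)$. Hence $\pi(a)\,\tilde r^2\le(\log(1-\alpha))^2$, using $\pi(a)\le1$. This case is exactly where the proximal teacher acts: the unbounded $\log\rho\to-\infty$ is replaced by a finite floor.

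\textbf{Case $\rho(a)>1$.} Here the reward is positive but unbounded as $\pi(a)\to0$, so the factor $\pi(a)$ must absorb the growth. Write $\pi=\pi(a)$ and $\pi^*=\pi^*(a)\le1$. Then $\alpha\rho+1-\alpha\le 1+\alpha/\pi$, because $\pi^*\le1$. It remains to bound $h(\pi)=\pi\log^2(1+\alpha/\pi)$ over $\pi\in(0,1]$.

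I would substitute $u=\alpha/\pi$, which gives $h=\alpha\cdot\log^2(1+u)/u$. The function $\phi(u)=\log^2(1+u)/u$ is continuous on $(0,\infty)$, tends to $0$ as $u\to0$ (where it behaves like $u$) and as $u\to\infty$, and is therefore bounded. Set $C^*=\sup_{u>0}\phi(u)$, a universal constant of roughly $0.65$; it is attained where $2u=(1+u)\log(1+u)$. This yields $\pi(a)\,\tilde r^2\le C^*\alpha$ in this case.

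Combining the two cases, each summand is at most $\max\{(\log(1-\alpha))^2,C^*\alpha\}$, which gives the stated bound.

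The main obstacle is the second case, where $\tilde r$ itself is not bounded. The key step is to use the sampling weight $\pi(a)$ to cancel the singularity of $\log(\alpha/\pi)$, together with $\pi^*\le1$. One must also check that the explicit $\alpha$ factors out cleanly via the substitution $u=\alpha/\pi$. I expect the rest to be routine.
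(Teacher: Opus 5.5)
Your proposal is correct and follows essentially the same route as the paper. Both bound the variance by the second moment and split each token by the sign of $\tilde r_k$. The negative case uses the floor $\log(1-\alpha)$, and the positive case uses $\pi^*\le 1$ and then maximizes $\log^2(1+u)/u$, where your $1+u$ is the paper's $t$. This gives the same constant $C^*=(\log t^*)^2/(t^*-1)\approx 0.65$.

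The only cosmetic difference is how boundedness of $\phi$ is obtained. You get it from continuity and the vanishing limits at $0$ and $\infty$. The paper instead locates the unique stationary point as the root of $\log t=2(1-1/t)$ in $(2,e^2)$.
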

\begin{proof}
	See Appendix \ref{proof of bounded variance}.
\end{proof}
\begin{remark}
	TOP-D employs an asymmetric operator to stabilize training. For heavy penalizations, it introduces a hard safety bound, $(\log(1-\alpha))^2$, preventing catastrophic divergence; for high rewards, it applies a linear dampening scaled by $C^*\alpha$. This effectively designates $\alpha$ as a direct variance controller. As $\alpha \to 1^-$, TOP-D recovers standard OPD, with its upper bound diverging ($(\log(1-\alpha))^2 \to \infty$) to reflect OPD's unbounded variance. Conversely, as $\alpha \to 0^+$, the variance vanishes.
\end{remark}

\subsection{Convergence Analysis}\label{Convergence Analysis}
To rigorously analyze the convergence properties of TOP-D, we abstract the construction of the proximal teacher as an iterative operator mapping over the policy space. Let $\Pi$ denote the space of all valid policies. To quantify the divergence within this space, we introduce the expected $L_1$ distance between any two policies $\pi_i, \pi_j$ as $d(\pi_i, \pi_j) = \mathbb{E}_{x \sim \mathcal{D}_x} \left[ \lVert \pi_i(\cdot \mid x) - \pi_j(\cdot \mid x) \rVert_1 \right]$. Next, we formalize the iterative process through the \textit{proximal teacher operator} $\mathcal{T}:\Pi\to\Pi$, given by
\begin{equation}\label{proximal teacher operator}
	\mathcal{T}(\pi)=\alpha\pi^*+(1-\alpha)\pi,
\end{equation}
where $\alpha\in(0,1)$ is the interpolation coefficient, and $\pi^*$ is the target teacher. Building upon this metric space, we are now ready to establish the following convergence analysis:
\begin{theorem}\label{convergence}
	For any initial student policy $\pi_0\in\Pi$, consider the iterative update process $\pi_{k+1}=\mathcal{T}(\pi_k)+\epsilon_k$\footnote{With a slight abuse of notation, let $\lVert\epsilon_k\rVert_1=d(\pi_{k+1},\tilde{\pi}_k^*)$, where $\tilde{\pi}_k^*=\mathcal{T}(\pi_k)$ is the proximal teacher at the $k$-th step.}, where $\epsilon_k$ accounts for the practical optimization error (e.g., function approximation and optimization noise) at the $k$-th global step, where $k\geq0$. Then, the following bound holds:
	\begin{equation}
		d(\pi_{k+1},\pi^*)\leq(1-\alpha)^{k+1}d(\pi_0,\pi^*)+\sum_{i=0}^{k}(1-\alpha)^{k-i}\lVert\epsilon_i\rVert_1.
	\end{equation}
	Furthermore, assuming the optimization process eventually stabilizes such that the asymptotic error is bounded by $\limsup_{k\rightarrow \infty}\lVert\epsilon_{k}\rVert_1\leq \epsilon_{\infty}$, we have
	\begin{equation}
		\limsup_{k \rightarrow \infty} d(\pi_{k+1}, \pi^*) \leq \frac{\epsilon_{\infty}}{\alpha}.
	\end{equation}
\end{theorem}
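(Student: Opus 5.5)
The plan is to derive a one-step contraction inequality for $d(\cdot,\pi^*)$ under the perturbed iteration and then unroll it. The argument uses only three facts: $d$ is a pseudometric, $\mathcal{T}$ contracts toward $\pi^*$, and the error term is measured by the footnote convention $\lVert\epsilon_k\rVert_1=d(\pi_{k+1},\tilde{\pi}_k^*)$.

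\textbf{Step 1 (triangle inequality).} Since $d$ is an expectation over $x\sim\mathcal{D}_x$ of an $L_1$ norm, it satisfies the triangle inequality pointwise in $x$, and hence after taking expectations. Applying it through the proximal teacher $\tilde{\pi}_k^*=\mathcal{T}(\pi_k)$ gives
\begin{equation*}
d(\pi_{k+1},\pi^*)\leq d(\pi_{k+1},\tilde{\pi}_k^*)+d(\mathcal{T}(\pi_k),\pi^*)=\lVert\epsilon_k\rVert_1+d(\mathcal{T}(\pi_k),\pi^*).
\end{equation*}

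\textbf{Step 2 (exact contraction).} From \eqref{proximal teacher operator}, for every $x$ we have $\mathcal{T}(\pi_k)(\cdot\mid x)-\pi^*(\cdot\mid x)=(1-\alpha)\bigl(\pi_k(\cdot\mid x)-\pi^*(\cdot\mid x)\bigr)$. Homogeneity of the $L_1$ norm and linearity of expectation then give
\begin{equation*}
d(\mathcal{T}(\pi_k),\pi^*)=(1-\alpha)\,d(\pi_k,\pi^*).
\end{equation*}
This holds with equality, so $\mathcal{T}$ is a strict contraction with fixed point $\pi^*$. Combining Steps 1 and 2 yields the recursion
\begin{equation*}
a_{k+1}\leq(1-\alpha)a_k+e_k,\qquad a_k=d(\pi_k,\pi^*),\quad e_k=\lVert\epsilon_k\rVert_1 .
\end{equation*}

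\textbf{Step 3 (unroll by induction).} We prove the finite-$k$ bound by induction on $k$. The base case $k=0$ is exactly the recursion. For the inductive step, substitute the hypothesis for $a_k$ into $a_{k+1}\leq(1-\alpha)a_k+e_k$. This multiplies the previous sum by $(1-\alpha)$, shifting each exponent $k-1-i$ to $k-i$, and appends the new term $e_k$ with exponent $0$. The result is the stated bound.

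\textbf{Step 4 (asymptotic bound).} The first term $(1-\alpha)^{k+1}a_0$ vanishes, since $a_0\leq 2$ and $\alpha\in(0,1)$. For the sum, fix $\delta>0$ and choose $K$ such that $e_i\leq\epsilon_\infty+\delta$ for all $i\geq K$. Split the sum at $K$.
\begin{itemize}
\item The head is $\sum_{i<K}(1-\alpha)^{k-i}e_i$. Each $e_i\leq 2$ because $L_1$ distances between distributions are at most $2$, so the head is at most $(1-\alpha)^{k-K+1}\cdot 2K$, which tends to $0$ as $k\to\infty$.
\item The tail is at most $(\epsilon_\infty+\delta)\sum_{j\geq0}(1-\alpha)^j=(\epsilon_\infty+\delta)/\alpha$.
\end{itemize}
Taking $\limsup_{k\to\infty}$ and then letting $\delta\to0$ gives $\limsup_{k\to\infty}d(\pi_{k+1},\pi^*)\leq\epsilon_\infty/\alpha$.

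The main point needing care is Step 4. A naive argument that takes the $\limsup$ inside the sum is not valid, so the head/tail split is required, together with the uniform boundedness $e_i\leq2$ to control the head. The other steps are routine, provided the footnote's abuse of notation is read as defining $\lVert\epsilon_k\rVert_1$ directly as a distance, so that no linear-space structure on $\Pi$ is needed.
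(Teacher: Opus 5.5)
Your proposal is correct and follows essentially the same route as the paper's proof: the triangle inequality through $\tilde{\pi}_k^*=\mathcal{T}(\pi_k)$, the exact identity $d(\mathcal{T}(\pi_k),\pi^*)=(1-\alpha)\,d(\pi_k,\pi^*)$, unrolling the resulting recursion, and a head/tail split at a threshold index with a geometric-series bound on the tail before letting $\delta\to0$. Your uniform bound $e_i\leq 2$ on the head is harmless but not needed. For fixed $K$ the head is a finite sum of fixed finite terms, each multiplied by a factor tending to zero, and that is how the paper handles it.
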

\begin{proof}
	See Appendix \ref{proof of convergence}.
\end{proof}
\begin{remark}
	The bound presented in Theorem \ref{convergence} not only reveals an inherent \textit{error forgetting} mechanism that isolates early high-variance updates, but also highlights a fundamental theoretical \textit{trade-off}: the asymptotic convergence gap is strictly bounded by $\epsilon_\infty/\alpha$. This implies that to break through the precision ceiling without compromising stability (i.e., keeping $\alpha$ small), the only viable path is to strictly minimize the single-step optimization error $\lVert\epsilon_k\rVert_1$. This theoretical bottleneck motivates the necessity of introducing \textit{internal trust region iterations} in the subsequent section.
\end{remark}

\subsection{Monotonic Improvement}\label{Monotonic Improvement}
As revealed in Theorem \ref{convergence}, the fundamental theoretical bottleneck of the proximal teacher framework lies in the asymptotic convergence gap bounded by $\epsilon_\infty/\alpha$. To tightly close this gap without compromising optimization stability (i.e., avoiding excessively large $\alpha$), it is imperative to rigorously minimize the single-step optimization error $\lVert\epsilon_k\rVert_1$. TOP-D achieves this via internal trust region iterations. In this section, we establish the theoretical insight of this mechanism by proving monotonic policy improvement. For mathematical rigor, we follow the notation established in Section \ref{Reinforcement Learning Formulation}.
\begin{assumption}\label{assumption 2}
	Given the fixed context window constraints inherent to autoregressive language models, we assume that any valid policy $\pi \in \Pi$ generates the \texttt{EOS} token within a finite maximum horizon $T_{\max}$. Consequently, the response length strictly satisfies $\lvert y\rvert \leq T_{\max}$, so $\sum_{s\neq s_\bot}d^{\pi}(s)<\infty$.
\end{assumption}
In our undiscounted setting ($\gamma=1$), the unnormalized measure $d^\pi(s)$ diverges because the permanent absorbing state $s_\bot$ dominates as $t \to \infty$. To construct a valid probability distribution over the meaningful transient states, we explicitly exclude $s_\bot$ and normalize $d^\pi(s)$.
\begin{definition}\label{normalized state distribution}
	We formally define the \textit{expected response length} of a policy $\pi$ as $\ell_\pi = \mathbb{E}_{x \sim \mathcal{D}_x, y \sim \pi(\cdot \mid x)}\left[\lvert y\rvert\right]$. Excluding $s_\bot$, the sum of the state distribution precisely evaluates to this length. Thus, we define the normalized state visitation measure $d^\pi_{\mathrm{norm}}(s) = \frac{d^\pi(s)}{\ell_\pi},\enspace s \neq s_\bot$.
\end{definition}
\begin{remark}
	This normalized measure enables an equivalent reformulation of the \textit{Performance Difference Lemma} \citep{kakade2002approximately}, strictly decoupling the sequence length from the single-step generation quality.
\end{remark}
We can thus rewrite the original objective \eqref{original eta} as $\eta(\pi)=\ell_{\pi}\cdot\mathbb{E}_{s \sim d^{\pi}_{\mathrm{norm}}(\cdot), a \sim\pi(\cdot \mid s)} \left[r(s,a)\right]$, and we are now ready to present the main theoretical result of this section:
\begin{theorem}\label{performance lower bound}
	Given Assumption \ref{assumption 2} and Definition \ref{normalized state distribution}, for any two policies, $\tilde{\pi}$ and $\pi$, the following bound holds:
	\begin{equation}
		\eta(\tilde{\pi})\geq\zeta_\pi(\tilde{\pi}) - 2\xi T_{\max}\ell_\pi\mathbb{E}_{s \sim d^{\pi}_{\mathrm{norm}}(\cdot)}\left[\mathcal{D}_{\mathrm{TV}}(\tilde{\pi}(\cdot \mid s),\pi(\cdot \mid s))\right],
	\end{equation}
	where $\zeta_\pi(\tilde{\pi})=\eta(\pi) + \ell_\pi\mathbb{E}_{s \sim d^\pi_{\mathrm{norm}}(\cdot), a \sim \tilde{\pi}(\cdot \mid s)} \left[ A^\pi(s, a) \right]$, and $\xi=\max_{s}\lvert\mathbb{E}_{a\sim\tilde{\pi}(\cdot\mid s)}\left[A^\pi(s,a)\right]\rvert$.
\end{theorem}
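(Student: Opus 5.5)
This is the TRPO-style lower bound adapted to the undiscounted, finite-horizon setting. The plan is to start from the Performance Difference Lemma (PDL) and bound the state-distribution mismatch by a coupling argument.

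\textbf{Step 1: exact performance difference.} Assumption~\ref{assumption 2} guarantees absorption at $s_\bot$ within $T_{\max}$ steps, and rewards vanish there. Hence all sums are finite and the telescoping argument of \citet{kakade2002approximately} applies with $\gamma=1$:
\begin{equation*}
\eta(\tilde{\pi})=\eta(\pi)+\sum_{s\neq s_\bot} d^{\tilde{\pi}}(s)\,\bar{A}(s),\qquad \bar{A}(s)=\mathbb{E}_{a\sim\tilde{\pi}(\cdot\mid s)}\left[A^\pi(s,a)\right].
\end{equation*}
The contribution of $s_\bot$ is zero because $Q^\pi=V^\pi=0$ there. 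One subtlety needs attention: the reward $r(s,a)=\log\frac{\pi^*}{\pi}$ depends on the policy. I fix the reward defining $A^\pi$ and $\eta$ as the one induced by the reference $\pi$, so that both sides are evaluated in the same MDP. This is the convention implicit in the statement.

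\textbf{Step 2: swap to the old visitation measure.} Add and subtract $\sum_s d^\pi(s)\bar{A}(s)$. The first piece equals $\ell_\pi\mathbb{E}_{s\sim d^\pi_{\mathrm{norm}},a\sim\tilde{\pi}}[A^\pi(s,a)]$ by Definition~\ref{normalized state distribution}, which together with $\eta(\pi)$ gives exactly $\zeta_\pi(\tilde{\pi})$. The remainder is controlled by
\begin{equation*}
\Bigl|\sum_s (d^{\tilde{\pi}}(s)-d^\pi(s))\bar{A}(s)\Bigr|\le \xi\sum_{t=1}^{T_{\max}}\bigl\lVert \mathbb{P}(s_t=\cdot\mid\tilde{\pi})-\mathbb{P}(s_t=\cdot\mid\pi)\bigr\rVert_1,
\end{equation*}
restricted to $s_t\neq s_\bot$. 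Only $t\le T_{\max}$ matters, since after that both chains sit in $s_\bot$ where $\bar{A}=0$.

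\textbf{Step 3: coupling bound on per-step marginals.} This is the main obstacle. Couple the two policies at each state so that their actions differ with probability $\mathcal{D}_{\mathrm{TV}}(\tilde{\pi}(\cdot\mid s),\pi(\cdot\mid s))$. The transitions are deterministic, so the trajectories coincide until the first disagreement. Therefore
\begin{equation*}
\lVert P_t^{\tilde{\pi}}-P_t^{\pi}\rVert_1\le 2\,\mathbb{P}(\text{disagree before }t)\le 2\sum_{t'<t}\mathbb{E}_{s_{t'}\sim P^\pi_{t'}}\left[\mathcal{D}_{\mathrm{TV}}(\tilde{\pi},\pi)(s_{t'})\right],
\end{equation*}
where the expectation is along the $\pi$-trajectory before divergence. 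Summing over $t\le T_{\max}$ bounds each inner sum by the full sum, which is $\sum_{s}d^\pi(s)\mathcal{D}_{\mathrm{TV}}(s)=\ell_\pi\mathbb{E}_{d^\pi_{\mathrm{norm}}}[\mathcal{D}_{\mathrm{TV}}]$. This yields the total bound $2T_{\max}\ell_\pi\mathbb{E}_{d^\pi_{\mathrm{norm}}}[\mathcal{D}_{\mathrm{TV}}]$. Multiplying by $\xi$ and subtracting gives the claimed inequality.

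The delicate points in this step are two. First, the union bound must be taken with respect to the $\pi$-marginals only, using that the coupled pair shares its state until first disagreement. Second, the crude factor $T_{\max}$ from summing over $t$ must be kept rather than improved, to match the stated constant.
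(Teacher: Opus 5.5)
Your proof is correct and follows the paper's skeleton step for step: the performance difference identity with $d^{\tilde{\pi}}$, adding and subtracting the $d^{\pi}$-weighted term to obtain $\zeta_\pi(\tilde{\pi})$, H\"older with $\xi$, a per-step bound on the marginal $L_1$ gaps for $t\le T_{\max}$, and the same crude factor $T_{\max}$. The one genuinely different ingredient is how you bound $\sum_{s\neq s_\bot}\lvert\mathbb{P}(s_t=s\mid\tilde{\pi})-\mathbb{P}(s_t=s\mid\pi)\rvert$.

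The paper telescopes over hybrid laws $\mathbb{P}_i$, where the first $i$ steps come from $\pi$ and the rest from $\tilde{\pi}$. Adjacent hybrids differ only in the action kernel at step $i$, where the state is still $\pi$-distributed, and the shared future rollout is an $L_1$ contraction. You instead use a maximal coupling and a union bound over the first-disagreement time. This lands on the identical inequality $2\sum_{i<t}\mathbb{E}_{s_i\sim\mathbb{P}(s_i\mid\pi)}[\mathcal{D}_{\mathrm{TV}}(\tilde{\pi}(\cdot\mid s_i),\pi(\cdot\mid s_i))]$.

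Your route makes it transparent why the weights are $\pi$'s marginals: before the first disagreement the shared state is the $\pi$-state, and dropping that indicator only enlarges the bound. The paper's route needs no joint construction, only contraction of a common Markov kernel. That contraction is exactly what makes the paper's informal ``marginalizing out the future'' step rigorous.

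Two small fixes. First, count only disagreements at $s\neq s_\bot$; a disagreement at $s_\bot$ never separates the chains. Otherwise $d^\pi(s_\bot)=\infty$ makes your identification of the full sum with $\ell_\pi\mathbb{E}_{d^\pi_{\mathrm{norm}}}[\mathcal{D}_{\mathrm{TV}}]$ ill-posed. Second, nothing forces you to ``keep'' the crude $T_{\max}$, since any sharper constant still implies the statement.

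Your explicit convention that the reward $\log(\pi^*/\pi)$ is frozen when evaluating $\eta(\tilde{\pi})$ is worth keeping. The paper's first display relies on it silently, and it is necessary: with $\tilde{\pi}$'s own reward $\log(\pi^*/\tilde{\pi})$ the bound fails. For example, take $\pi=\pi^*\neq\tilde{\pi}$. Then $A^\pi\equiv0$ and $\xi=0$, so the right-hand side is $0$, while $\eta(\tilde{\pi})=-\mathbb{E}_{x\sim\mathcal{D}_x}[\mathcal{D}_{\mathrm{KL}}(\tilde{\pi}(\cdot\mid x),\pi^*(\cdot\mid x))]<0$.
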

\begin{proof}
	See Appendix \ref{proof of performance lower bound}.
\end{proof}
\begin{remark}
	Let $\mathcal{M}_{\pi}(\tilde{\pi})$ denote the lower bound established in Theorem \ref{performance lower bound}. By definition, $\eta(\pi)=\mathcal{M}_{\pi}(\pi)$. Therefore, any policy update that improves the lower bound guarantees monotonic improvement on the true objective, i.e., $\eta(\tilde{\pi}) - \eta(\pi) \geq \mathcal{M}_\pi(\tilde{\pi}) - \mathcal{M}_\pi(\pi)$.
\end{remark}
As a result, optimizing $\mathcal{M}_{\pi}(\tilde{\pi})$ ensures that the original reverse KL divergence objective monotonically improves during the internal updates. Let the superscript $(n)$ denote the $n$-th step of the internal trust region iterations at the $k$-th global step, and let $\tilde{\pi}_k^* = \mathcal{T}(\pi_k)$ be the fixed proximal teacher. By iteratively optimizing $\mathcal{M}_{\pi}(\tilde{\pi})$, we obtain a monotonically decreasing sequence of KL divergences:
\begin{equation}
	\mathbb{E}_{x\sim\mathcal{D}_x}\left[\mathcal{D}_{\mathrm{KL}}^{(0)}\right]\geq\mathbb{E}_{x\sim\mathcal{D}_x}\left[\mathcal{D}_{\mathrm{KL}}^{(1)}\right]\geq\cdots\geq\mathbb{E}_{x\sim\mathcal{D}_x}\left[\mathcal{D}_{\mathrm{KL}}^{(n)}\right],
\end{equation}
where $\mathcal{D}_{\mathrm{KL}}^{(j)} \triangleq \mathcal{D}_{\mathrm{KL}}(\pi_k^{(j)}(\cdot \mid x),\tilde{\pi}_k^*(\cdot \mid x))$ denotes the reverse KL divergence objective at the $j$-th internal iteration. If the internal trust region optimization proceeds until the final expected divergence satisfies $\mathbb{E}_{x \sim \mathcal{D}_x}\left[\mathcal{D}_{\mathrm{KL}}^{(n)}\right] \leq \frac{1}{2}\delta^2$ where $\delta>0$, then by applying Pinsker's inequality $\lVert\epsilon_{k}\rVert_1=d(\pi_{k+1},\tilde{\pi}_k^*)=d(\pi_k^{(n)}, \tilde{\pi}_k^*)\leq\mathbb{E}_{x \sim \mathcal{D}_x}\left[\sqrt{2\mathcal{D}_{\mathrm{KL}}^{(n)}}\right]\leq\sqrt{2\mathbb{E}_{x\sim\mathcal{D}_x}\left[\mathcal{D}_{\mathrm{KL}}^{(n)}\right]}\leq\delta$ alongside Jensen's inequality, we rigorously guarantee that the optimization error satisfies $\lVert\epsilon_k\rVert_1\leq\delta$. This perfectly closes the overall theoretical loop: the internal trust region iterations mathematically guarantee the continuous minimization of the single-step optimization error $\epsilon_k$, thereby enabling the algorithm to tightly close the asymptotic convergence gap $\epsilon_\infty/\alpha$ defined in Theorem \ref{convergence}.

\section{Experiments}
In this section, we empirically validate the effectiveness of TOP-D. Specifically, Section \ref{Experimental Setup} details our validation datasets, model configurations, baseline methods, and implementation hyperparameters. Section \ref{Main Results} presents a comprehensive comparison of TOP-D against existing post-training baselines. Finally, Section \ref{Ablation Study} provides an in-depth analysis of the core components of our method, including the external proximal teacher and the internal trust region iterations, as well as the sensitivity analysis.

\subsection{Experimental Setup}\label{Experimental Setup}
\textbf{Datasets and benchmarks.} We conduct our training phase using the DAPO-Math-17k \citep{yu2025dapo} dataset, a high-quality mathematical reasoning corpus. For validation, we benchmark the post-trained models on widely recognized and challenging mathematical reasoning competitions like AIME and AMC.

\textbf{Models and baselines.} To comprehensively investigate the scalability of TOP-D and its robustness across different teacher-student capacity gaps, we employ two distinct base models as students: Qwen3-1.7B-Base \citep{yang2025qwen3} and Qwen3-8B-Base \citep{yang2025qwen3}. For the target teacher policies, we utilize Qwen3-14B \citep{yang2025qwen3} and the exceptionally capable Qwen3-30B-A3B-Instruct-2507 \citep{yang2025qwen3}. We compare TOP-D against several post-training paradigms including GRPO \citep{shao2024deepseekmath}, DAPO \citep{yu2025dapo}, and standard OPD \citep{lu2025onpolicydistillation}.

\textbf{Implementation details.} During the rollout phase, the student model generates 8 responses for each prompt. To execute the trust region iterations, we follow the same off-policy setting as RLVR by dividing the same batch of data into multiple mini-batches, where the global batch size is set to 512 prompts (4096 samples) and the mini-batch size is set to 32 prompts (256 samples). The core interpolation coefficient $\alpha$ for TOP-D is set to 0.1 or 0.2. During training, we apply a sampling temperature of 1.0 and a top-$p$ of 1.0. For performance validation, we use a temperature of 1.0 and a top-$p$ of 0.7. Comprehensive details regarding other hyperparameters are provided in Appendix \ref{Hyperparameters}.

\begin{table}[!h]
	\centering
	\renewcommand{\arraystretch}{1.1}
	\resizebox{\linewidth}{!}{
		\begin{tabular}{llc ccc ccc}
			\toprule
			\multirow{4}{*}[-0.8em]{\textbf{Student Model}} & \multirow{4}{*}[-0.8em]{\textbf{Paradigm}} & \multirow{4}{*}[-0.8em]{\textbf{Method}} & \multicolumn{6}{c}{\textbf{Teacher Model}} \\
			\cmidrule(lr){4-9}
			& & & \multicolumn{6}{c}{\textit{\textbf{Qwen3-30B-A3B-Instruct-2507}}} \\
			\cmidrule(lr){4-9}
			& & & \multicolumn{4}{c}{\textbf{avg@32}} & \multicolumn{2}{c}{\textbf{avg@8}} \\
			\cmidrule(lr){4-7} \cmidrule(lr){8-9}
			& & & \textbf{AIME24} & \textbf{AIME25} & \textbf{AIME26} & \textbf{AMC23} & \textbf{MATH-500} & \textbf{Olympiad} \\
			\midrule
			
			\multirow{5}{*}{\textit{\textbf{Qwen3-8B-Base}}} 
			& \textbf{Base} & - & 9.38 & 8.02 & 6.15 & 53.13 & 75.23 & 40.08 \\
			\cmidrule(l){2-9}
			& \multirow{2}{*}{\textbf{RLVR}} & GRPO & 30.10 & 22.08 & 21.67 & 56.33 & 76.83 & 44.92 \\
			& & DAPO & 32.92 & 27.81 & 32.29 & 65.39 & 81.65 & 44.23 \\
			\cmidrule(l){2-9}
			& \multirow{2}{*}{\textbf{Distillation}} & OPD & 24.58 & 23.33 & 25.42 & 76.88 & 87.98 & 59.29 \\
			& & \cellcolor{1}\textbf{TOP-D (Ours)} & \cellcolor{1}\textbf{50.42} {\color{2}\scriptsize\textbf{(+25.84)}} & \cellcolor{1}\textbf{34.06} {\color{2}\scriptsize\textbf{(+10.73)}} & \cellcolor{1}\textbf{44.06} {\color{2}\scriptsize\textbf{(+18.64)}} & \cellcolor{1}\textbf{88.13} {\color{2}\scriptsize\textbf{(+11.25)}} & \cellcolor{1}\textbf{91.23} {\color{2}\scriptsize\textbf{(+3.25)}} & \cellcolor{1}\textbf{64.67} {\color{2}\scriptsize\textbf{(+5.38)}} \\
			\bottomrule
		\end{tabular}
	}
	\caption{Performance comparison of the Qwen3-8B-Base across various mathematical benchmarks.}
	\label{8b}
\end{table}

\begin{table}[!h]
	\centering
	\renewcommand{\arraystretch}{1.1}
	\resizebox{\linewidth}{!}{
		\begin{tabular}{llc ccc ccc}
			\toprule
			\multirow{4}{*}[-0.8em]{\textbf{Student Model}} & \multirow{4}{*}[-0.8em]{\textbf{Paradigm}} & \multirow{4}{*}[-0.8em]{\textbf{Method}} & \multicolumn{6}{c}{\textbf{Teacher Model}} \\
			\cmidrule(lr){4-9}
			& & & \multicolumn{3}{c}{\textit{\textbf{Qwen3-30B-A3B-Instruct-2507}}} & \multicolumn{3}{c}{\textit{\textbf{Qwen3-14B}}} \\
			\cmidrule(lr){4-6} \cmidrule(lr){7-9}
			& & & \multicolumn{6}{c}{\textbf{avg@32}} \\
			\cmidrule(lr){4-9}
			& & & \textbf{AIME24} & \textbf{AIME25} & \textbf{AIME26} & \textbf{AIME24} & \textbf{AIME25} & \textbf{AIME26} \\
			\midrule
			
			\multirow{5}{*}{\textit{\textbf{Qwen3-1.7B-Base}}} 
			& \textbf{Base} & - & 3.44 & 1.98 & 1.88 & 3.44 & 1.98 & 1.88 \\
			\cmidrule(l){2-9}
			& \multirow{2}{*}{\textbf{RLVR}} & GRPO & 10.52 & 7.19 & 4.69 & 10.52 & 7.19 & 4.69 \\
			& & DAPO & 12.29 & 11.56 & 8.54 & 12.29 & 11.56 & 8.54 \\
			\cmidrule(l){2-9}
			& \multirow{2}{*}{\textbf{Distillation}} & OPD & 8.96 & 7.50 & 5.94 & 7.81 & 8.33 & 5.63 \\
			& & \cellcolor{1}\textbf{TOP-D (Ours)} & \cellcolor{1}\textbf{20.31} {\color{2}\scriptsize\textbf{(+11.35)}} & \cellcolor{1}\textbf{17.71} {\color{2}\scriptsize\textbf{(+10.21)}} & \cellcolor{1}\textbf{13.75} {\color{2}\scriptsize\textbf{(+7.81)}} & \cellcolor{1}\textbf{12.81} {\color{2}\scriptsize\textbf{(+5.00)}} & \cellcolor{1}\textbf{14.69} {\color{2}\scriptsize\textbf{(+6.36)}} & \cellcolor{1}\textbf{9.69} {\color{2}\scriptsize\textbf{(+4.06)}} \\
			\bottomrule
		\end{tabular}
	}
	\caption{Performance comparison of the Qwen3-1.7B-Base across AIME benchmarks.}
	\label{1.7b}
\end{table}

\subsection{Main Results}\label{Main Results}
Table \ref{8b} and Table \ref{1.7b} summarize the validation results of TOP-D compared to other baselines. TOP-D consistently outperforms the evaluated baselines across all mathematical benchmarks and student model scales. The most striking improvement is observed when compared to standard OPD. For instance, using the Qwen3-8B-Base student supervised by the Qwen3-30B-A3B-Instruct-2507 teacher, TOP-D achieves an avg@32 accuracy of 50.42\% on AIME24, delivering a massive 25.84\% absolute improvement over standard OPD (24.58\%). This immense performance gain definitively proves that bounding the variance of the distillation reward effectively stabilizes the training process and subsequently unlocks the student's reasoning potential. Furthermore, TOP-D firmly dominates the RLVR baselines, exceeding the highly competitive DAPO by 17.5\% absolute accuracy on the same benchmark, demonstrating the superiority of stabilized TOP-D over the traditional RLVR baselines.

When scaling down to the smaller Qwen3-1.7B-Base student, standard OPD suffers from severe performance degradation, noticeably falling behind the RLVR baselines. This fragility highlights the vulnerability of the unbounded logarithmic probability ratio when the capacity gap between the teacher and the student is severe. In stark contrast, TOP-D consistently maintains its substantial advantage. For the 1.7B student, TOP-D pushes the AIME24 accuracy past the 20\% threshold and consistently surpasses standard OPD by over 10 percentage points on both AIME24 and AIME25.

\subsection{Ablation Study}\label{Ablation Study}
To deeply understand the mechanisms driving TOP-D's success, we isolate the impact of its core structural designs. For this analysis, we utilize the Qwen3-1.7B-Base student model supervised by the Qwen3-30B-A3B-Instruct-2507 teacher. Figure \ref{ablation curves} illustrates the learning curves of TOP-D compared to its ablated variants and the standard OPD baseline throughout the entire model training process.

By setting the interpolation coefficient to $\alpha=1.0$, we remove the external proximal teacher and revert the objective to the unbounded standard OPD reward. As shown in the training curves, this ablation causes highly unstable training dynamics and a significant degradation in overall performance. On the other hand, by disabling the internal trust region iterations (denoted as w/o off-policy), we force the optimization into a strictly on-policy regime. This leads to a drastic drop in sample efficiency, requiring significantly more training steps to converge and climb the performance curve. These empirical results clearly demonstrate that both the \textit{external proximal teacher} and the \textit{internal trust region iterations} are indispensable components for achieving optimization stability and rapid performance improvement. Furthermore, we evaluate the sensitivity of TOP-D to the interpolation coefficient by varying $\alpha \in \{0.1, 0.2, 0.3\}$. We empirically observe that the training stability, convergence speed, and final performance exhibit marginal differences across this interval. This demonstrates that TOP-D is highly robust to the choice of $\alpha$, functioning as a reliable solution without the need for exhaustive and computationally expensive hyperparameter tuning efforts.

\begin{figure*}[!t]
	\centering
	\includegraphics[scale=0.5]{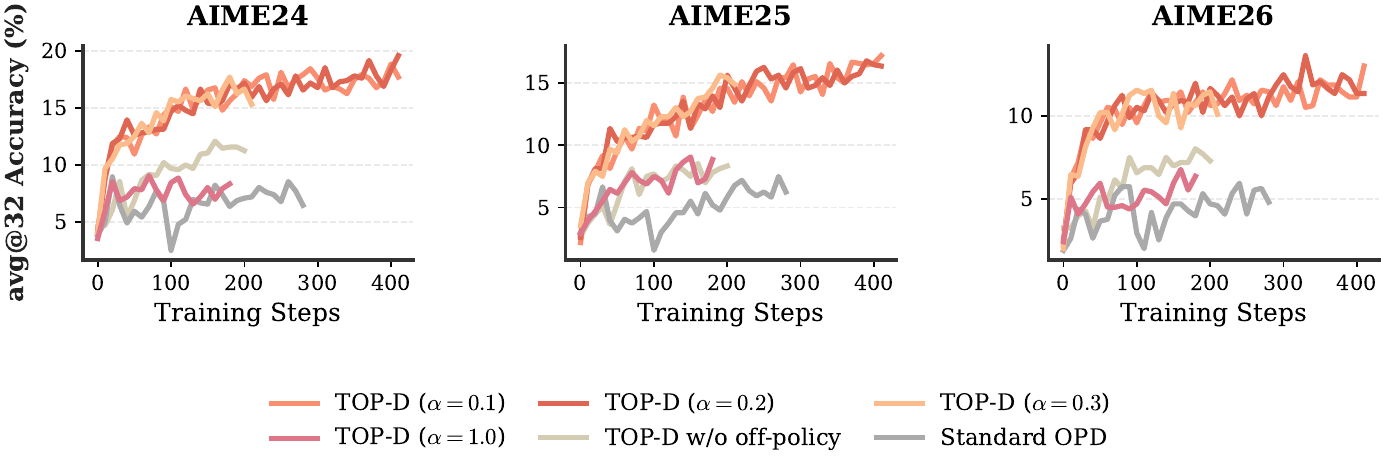}
	\caption{Learning curves for the ablation study on the Qwen3-1.7B-Base. We isolate the effects of the external proximal teacher ($\alpha=1.0$) and the internal trust region iterations (w/o off-policy).}
	\label{ablation curves}
\end{figure*}

\section{Conclusion}
In this paper, we introduce Trust Region Policy Distillation (TOP-D), a principled post-training paradigm resolving the optimization instability and sample inefficiency of standard On-Policy Distillation (OPD). Theoretically, TOP-D bounds gradient variance via an external proximal teacher and guarantees monotonic improvement with safe off-policy data reuse through internal trust region iterations. Empirically, it achieves massive performance gains on mathematical reasoning benchmarks, decisively outperforming standard OPD and competitive RLVR baselines across multiple model scales. By achieving these advancements without any additional computational overhead, TOP-D establishes a highly reliable, sample-efficient, and robust paradigm for aligning foundation models.

\bibliography{references}
\bibliographystyle{plainnat}

%%%%%%%%%%%%%%%%%%%%%%%%%%%%%%%%%%%%%%%%%%%%%%%%%%%%%%%%%%%%
\appendix

\newpage
\section{More Results}

\begin{table}[!h]
	\centering
	\renewcommand{\arraystretch}{1.1}
	\resizebox{\linewidth}{!}{
		\begin{tabular}{llc ccc ccc}
			\toprule
			\multirow{4}{*}[-0.8em]{\textbf{Student Model}} & \multirow{4}{*}[-0.8em]{\textbf{Paradigm}} & \multirow{4}{*}[-0.8em]{\textbf{Method}} & \multicolumn{6}{c}{\textbf{Teacher Model}} \\
			\cmidrule(lr){4-9}
			& & & \multicolumn{6}{c}{\textit{\textbf{Qwen3-30B-A3B-Instruct-2507}}} \\
			\cmidrule(lr){4-9}
			& & & \multicolumn{4}{c}{\textbf{avg@32}} & \multicolumn{2}{c}{\textbf{avg@8}} \\
			\cmidrule(lr){4-7} \cmidrule(lr){8-9}
			& & & \textbf{AIME24} & \textbf{AIME25} & \textbf{AIME26} & \textbf{AMC23} & \textbf{MATH-500} & \textbf{Olympiad} \\
			\midrule
			
			\multirow{5}{*}{\textit{\textbf{Qwen3-1.7B-Base}}} 
			& \textbf{Base} & - & 3.44 & 1.98 & 1.88 & 27.27 & 56.80 & 24.00 \\
			\cmidrule(l){2-9}
			& \multirow{2}{*}{\textbf{RLVR}} & GRPO & 10.52 & 7.19 & 4.69 & 35.55 & 54.73 & 18.47 \\
			& & DAPO & 12.29 & 11.56 & 8.54 & 37.50 & 61.88 & 26.63 \\
			\cmidrule(l){2-9}
			& \multirow{2}{*}{\textbf{Distillation}} & OPD & 8.96 & 7.50 & 5.94 & 45.70 & 71.13 & 35.83 \\
			& & \cellcolor{1}\textbf{TOP-D (Ours)} & \cellcolor{1}\textbf{20.31} {\color{2}\scriptsize\textbf{(+11.35)}} & \cellcolor{1}\textbf{17.71} {\color{2}\scriptsize\textbf{(+10.21)}} & \cellcolor{1}\textbf{13.75} {\color{2}\scriptsize\textbf{(+7.81)}} & \cellcolor{1}\textbf{54.38} {\color{2}\scriptsize\textbf{(+8.68)}} & \cellcolor{1}\textbf{77.33} {\color{2}\scriptsize\textbf{(+6.20)}} & \cellcolor{1}\textbf{43.06} {\color{2}\scriptsize\textbf{(+7.23)}} \\
			\bottomrule
		\end{tabular}
	}
	\caption{Performance comparison of the Qwen3-1.7B-Base across various mathematical benchmarks.}
\end{table}

\section*{Computational Resources}
All primary experiments, including the training of the TOP-D and the reproduction of baseline methods, were conducted on a high-performance computing cluster equipped with NVIDIA H200 GPUs. To accelerate the research iteration cycle, our standard distributed training setup utilized 4 compute nodes, with each node containing 8 H200 GPUs (amounting to 32 GPUs in total). However, we emphasize that our proposed method is highly resource-efficient and accessible. The entire TOP-D training pipeline and its reported results can be fully reproduced on a single standard 8-GPU node.

\section*{Limitations}
Although TOP-D establishes a highly reliable and sample-efficient paradigm for policy distillation, yielding massive improvements over existing baselines, our current validation is naturally bounded by computational budgets and time constraints. Specifically, our primary validations focus on student models up to the 8B parameter scale. Due to resource limitations, we have not yet investigated the scaling behavior of TOP-D on massive-scale foundation models (e.g., scaling the student model beyond 30B parameters). However, given the theoretically grounded variance-reduction properties of our proximal teacher and the guaranteed monotonic improvement from our trust region iterations, we are optimistic that the robust performance of TOP-D will seamlessly transfer to larger-scale regimes. 

Furthermore, our current empirical evaluations are constrained to a relatively short optimization window of only approximately 200 to 400 update steps (except for the RLVR baselines). Crucially, we have not observed any empirical signs of performance saturation within this timeframe. This strongly suggests that the impressive results reported in our main text are far from their upper bounds, leaving substantial room for further improvements with continued training. We leave the exploration of such extreme-scale distillation and extended training horizons as exciting avenues for future work.

\newpage
\section{Hyperparameters}\label{Hyperparameters}
\begin{table}[!h]
	\centering
	\renewcommand{\arraystretch}{1.2}
	\resizebox{1.0\linewidth}{!}{
		\begin{tabular}{l c c c c}
			\toprule
			\textbf{Hyperparameter} & \textbf{GRPO} \citep{shao2024deepseekmath} & \textbf{DAPO} \citep{yu2025dapo} & \textbf{OPD} \citep{lu2025onpolicydistillation} & \textbf{TOP-D (Ours)} \\
			\midrule
			
			\rowcolor{0}\multicolumn{5}{l}{\textit{\textbf{General}}} \\
			Max prompt length & 2048 & 2048 & 2048 & 2048 \\
			
			\midrule
			\rowcolor{0}\multicolumn{5}{l}{\textit{\textbf{Optimization}}} \\
			Global batch size & 512 & 512 & 512 & 512 \\
			Mini-batch size & 32 & 32 & 512 & 32 \\
			Mini-batches & 16 & 16 & 1 & 16 \\
			Group size & 8 & 8 & 8 & 8 \\
			Off-policy epoch & 1 & 1 & - & 1 \\
			Optimizer & AdamW & AdamW & AdamW & AdamW \\
			Learning rate & 1e-6 & 1e-6 & 1e-6 & 1e-6 \\
			Warmup steps & 10 & 10 & - & - \\
			
			\midrule
			\rowcolor{0}\multicolumn{5}{l}{\textit{\textbf{Sampling}}} \\
			Rollout max response length & 16384 & 16384 & 16384 & 16384 \\
			Rollout temperature & 1.0 & 1.0 & 1.0 & 1.0 \\
			Rollout top-$p$ & 1.0 & 1.0 & 1.0 & 1.0 \\
			Validation max response length & 16384 & 16384 & 16384 & 16384 \\
			Validation temperature & 1.0 & 1.0 & 1.0 & 1.0 \\
			Validation top-$p$ & 0.7 & 0.7 & 0.7 & 0.7 \\
			
			\midrule
			\rowcolor{0}\multicolumn{5}{l}{\textit{\textbf{Algorithm-specific}}} \\
			$\epsilon_{\mathrm{low}}$ & 0.2 & 0.2 & - & 0.2 \\
			$\epsilon_{\mathrm{high}}$ & 0.2 & 0.28 & - & 0.2 \\
			Interpolation $\alpha$ & - & - & - & 0.1 / 0.2 \\
			
			\bottomrule
		\end{tabular}
	}
	\caption{Detailed hyperparameters for different methods.}
	\label{hyperparameters}
\end{table}

\newpage
\section{Proofs}

\subsection{Proof of Theorem \ref{bounded variance}} \label{proof of bounded variance}
\textbf{Theorem \ref{bounded variance}.} \textit{
	Under Assumption \ref{assumption 1}, for any interpolation coefficient $\alpha\in(0,1)$, the variance of the TOP-D gradient estimator $\tilde{g}_k$ does not diverge across the entire probability space. Instead, it is strictly bounded by a uniform absolute limit:
	\begin{equation}
		\mathrm{Var}(\tilde{g}_k)\leq M^2\lvert\mathcal{V}\rvert\max\left\{\left(\log(1-\alpha)\right)^2, C^*\alpha\right\},
	\end{equation}
	where $C^*$ is a universal mathematical constant, and $\lvert\mathcal{V}\rvert$ denotes the size of the vocabulary $\mathcal{V}$.
}
\begin{proof}
	By the fundamental definition of variance, for any random vector $g$, $\mathrm{Var}(g)\leq\mathbb{E}\left[\lVert g\rVert^2\right]$. Thus, it suffices to establish a uniform upper bound for the gradient's second moment $\mathbb{E}\left[\lVert\tilde{g}_k\rVert^2\right]$.
	
	The TOP-D immediate reward is $\tilde{r}_k=\log\left(\alpha\rho_k+1-\alpha\right)$. Given a specific prompt $x$ and a generated prefix $y_{<k}$, we analyze the conditional expectation of the squared TOP-D reward $\tilde{r}_k^2$ over the finite vocabulary $\mathcal{V}$ at $k$-th token:
	\begin{equation}
		\mathbb{E}_{y_k \sim \pi_\theta(\cdot \mid x, y_{<k})}\left[\tilde{r}_k^2\right] = \sum_{y_k \in \mathcal{V}} \pi_\theta(y_k \mid x, y_{<k}) \left( \log\left(\alpha \frac{\pi^*(y_k \mid x, y_{<k})}{\pi_\theta(y_k \mid x, y_{<k})} + 1 - \alpha\right) \right)^2.
	\end{equation}
	For simplicity, let $p = \pi_\theta(y_k \mid x, y_{<k})$ and $q = \pi^*(y_k \mid x, y_{<k})$, where $p, q \in (0, 1)$. We define the objective term for each token in the summation as
	\begin{equation}
		h(p, q) = p \left( \log\left(\alpha \frac{q}{p} + 1 - \alpha\right) \right)^2.
	\end{equation}
	We seek to find the global supremum of $h(p, q)$ over the open domain $(p, q) \in (0, 1) \times (0, 1)$ by splitting it into two regions:
	
	\textbf{Case 1: The negative reward region $q < p$.}
	
	In this region, $\frac{q}{p} < 1$, so the inner logarithmic term satisfies $\alpha \frac{q}{p} + 1 - \alpha < 1$. Its infimum is approached as $q \to 0^+$, giving a strict lower bound of $1 - \alpha$. Therefore, $1 - \alpha < \alpha \frac{q}{p} + 1 - \alpha < 1$. The squared logarithm is bounded by
	\begin{equation}
		\left(\log\left(\alpha \frac{q}{p} + 1 - \alpha\right)\right)^2 < \left(\log(1 - \alpha)\right)^2.
	\end{equation}
	Since $p \in (0, 1)$, we directly bound the entire term:
	\begin{equation}
		h(p, q) < (\log(1 - \alpha))^2.
	\end{equation}
	
	\textbf{Case 2: The positive reward region $q \geq p$.}
	
	Here, $\frac{q}{p} \geq 1$. Since we strictly know $q < 1$, we can upper bound the ratio $\frac{q}{p} < \frac{1}{p}$. Because the function $(\log(\alpha x + 1 - \alpha))^2$ is monotonically increasing for $x \geq 1$, we can relax $h(p, q)$ into a univariate upper bounding function solely dependent on $p$:
	\begin{equation}
		h(p, q) < p \left( \log\left(\alpha \frac{1}{p} + 1 - \alpha\right) \right)^2.
	\end{equation}
	Furthermore, since $1 - \alpha > 0$ as $\alpha \in (0, 1)$, we can slightly loosen the inner term for algebraic simplicity without affecting its asymptotic behavior: $\frac{\alpha}{p} + 1 - \alpha < \frac{\alpha}{p} + 1$. Because the logarithm is monotonic, this substitution strictly bounds the function from above:
	\begin{equation}
		h(p, q) < p \left( \log\left(\alpha \frac{1}{p} + 1 - \alpha\right) \right)^2<p \left( \log\left(\frac{\alpha}{p} + 1\right) \right)^2.
	\end{equation}
	Next, let $u = \frac{1}{p}\in (1, \infty)$. We define the ultimate boundary function for this region:
	\begin{equation}
		f(u) = \frac{\left(\log(\alpha u + 1)\right)^2}{u}.
	\end{equation}
	To find the global maximum of $f(u)$ on $(1, \infty)$, we compute its derivative with respect to $u$:
	\begin{equation}
		f'(u) = \frac{2 \log(\alpha u + 1) \cdot \frac{\alpha}{\alpha u + 1} \cdot u - \left(\log(\alpha u + 1)\right)^2}{u^2}.
	\end{equation}
	Setting $f'(u) = 0$ identifies the stationary points. Since $u > 1$ and $\alpha \in (0, 1)$, the term $\log(\alpha u + 1) > 0$ and can be safely factored out. The condition simplifies to
	\begin{equation}
		\frac{2\alpha u}{\alpha u + 1} = \log(\alpha u + 1).
	\end{equation}
	We further introduce a variable substitution $t = \alpha u + 1$. Because $\alpha u > 0$, we have $t > 1$. Noting that $\alpha u = t - 1$, the condition transforms into a pure transcendental equation independent of $\alpha$:
	\begin{equation}
		2\left(1-\frac{1}{t}\right) = \log t \implies \log t - 2\left(1 - \frac{1}{t}\right) = 0.
	\end{equation}
	Let $w(t) = \log t - 2\left(1 - \frac{1}{t}\right)$. We analyze its roots for $t > 1$:
	\begin{enumerate}
		\item The derivative is $w'(t) = \frac{1}{t} - \frac{2}{t^2} = \frac{t - 2}{t^2}$.
		\item For $t \in (1, 2)$, $w'(t) < 0$, meaning $w(t)$ strictly decreases. Since $\lim_{t \to 1^+} w(t) = 0$, $w(t) < 0$ for all $t \in (1, 2)$.
		\item For $t > 2$, $w'(t) > 0$, meaning $w(t)$ strictly increases.
		\item Since $w(2)=\log2-1 < 0$ and $w(e^2)=\frac{2}{e^2}>0$, by the Intermediate Value Theorem, there exists exactly one unique root $t^*$ in the interval $(2, e^2)$.
	\end{enumerate}
	Since there is only one critical point, this unique stationary point corresponds to the global maximum of $f(u)$. The optimal point is $u^* = \frac{t^* - 1}{\alpha}$. Substituting $u^*$ back into $f(u)$ yields the closed-form analytical upper bound:
	\begin{equation}
		f(u)\leq f(u^*)=\frac{\left(\log t^*\right)^2}{t^* - 1}\cdot\alpha.
	\end{equation}
	Let $C^* = \frac{\left(\log t^*\right)^2}{t^* - 1}$. We conclude that for $q \geq p$, the term is strictly bounded by $h(p, q) \leq C^* \alpha$.
	
	In summary, the function $h(p, q)$ is uniformly bounded across the entire open domain $(0, 1) \times (0, 1)$ by an absolute limit:
	\begin{equation}
		\sup_{p, q \in (0,1)} h(p, q) \leq \max\left\{\left(\log(1-\alpha)\right)^2, C^*\alpha\right\}.
	\end{equation}
	
	Because the conditional expectation is a sum over a finite vocabulary of size $\lvert\mathcal{V}\rvert$, and each individual term is strictly bounded by $\max\left\{\left(\log(1-\alpha)\right)^2, C^*\alpha\right\}$, the conditional expectation itself is absolutely bounded regardless of the specific context $(x, y_{<k})$:
	\begin{equation}
		\begin{split}
			\mathbb{E}_{y_k \sim \pi_\theta(\cdot \mid x, y_{<k})}\left[\tilde{r}_k^2\right] &= \sum_{y_k \in \mathcal{V}} \pi_\theta(y_k \mid x, y_{<k}) \left( \log\left(\alpha \frac{\pi^*(y_k \mid x, y_{<k})}{\pi_\theta(y_k \mid x, y_{<k})} + 1 - \alpha\right) \right)^2\\
			&= \sum_{y_k \in \mathcal{V}} h(p, q) \leq\lvert\mathcal{V}\rvert\max\left\{\left(\log(1-\alpha)\right)^2, C^*\alpha\right\}. \\
		\end{split}
	\end{equation}
	Now, we move to the TOP-D gradient estimator $\tilde{g}_k = \nabla_\theta \log \pi_\theta(y_k \mid x, y_{<k}) \cdot \tilde{r}_k$. According to Assumption \ref{assumption 1}, we have
	\begin{equation}
		\lVert\tilde{g}_k\rVert^2 \leq \lVert\nabla_\theta \log \pi_\theta(y_k \mid x, y_{<k})\rVert^2 \cdot \tilde{r}_k^2 \leq M^2 \tilde{r}_k^2.
	\end{equation}
	Taking the total expectation over the prompt distribution $x \sim \mathcal{D}_x$, the autoregressive prefix trajectories $y_{<k} \sim \pi_\theta(\cdot \mid x)$, and the current token $y_k \sim \pi_\theta(\cdot \mid x, y_{<k})$, we apply the law of iterated expectations:
	\begin{equation}
		\begin{split}
			\mathbb{E}[\lVert\tilde{g}_k\rVert^2] &= \mathbb{E}_{x \sim \mathcal{D}_x, y_{<k} \sim \pi_\theta(\cdot \mid x)} \left[ \mathbb{E}_{y_k \sim \pi_\theta(\cdot \mid x, y_{<k})} \left[ \lVert\tilde{g}_k\rVert^2 \right] \right] \\
			&\leq\mathbb{E}_{x \sim \mathcal{D}_x, y_{<k} \sim \pi_\theta(\cdot \mid x)} \left[ M^2 \mathbb{E}_{y_k \sim \pi_\theta(\cdot \mid x, y_{<k})}\left[\tilde{r}_k^2\right]\right]\\
			&\leq\mathbb{E}_{x \sim \mathcal{D}_x, y_{<k} \sim \pi_\theta(\cdot \mid x)} \left[ M^2 \lvert\mathcal{V}\rvert\max\left\{\left(\log(1-\alpha)\right)^2, C^*\alpha\right\}\right]\\
			&=M^2\lvert\mathcal{V}\rvert\max\left\{\left(\log(1-\alpha)\right)^2, C^*\alpha\right\}.\\
		\end{split}
	\end{equation}
	Finally, returning to our variance inequality established at the beginning of the proof:
	\begin{equation}
		\mathrm{Var}(\tilde{g}_k) \leq \mathbb{E}\left[\lVert\tilde{g}_k\rVert^2\right] \leq M^2\lvert\mathcal{V}\rvert\max\left\{\left(\log(1-\alpha)\right)^2, C^*\alpha\right\}.
	\end{equation}
	Since $M$, $\lvert\mathcal{V}\rvert$, $\alpha$, and $C^*$ are all finite, the exact gradient variance is absolutely bounded. The proof is now complete.
\end{proof}

\subsection{Proof of Theorem \ref{convergence}} \label{proof of convergence}
\textbf{Theorem \ref{convergence}.} \textit{
	For any initial student policy $\pi_0\in\Pi$, consider the iterative update process $\pi_{k+1}=\mathcal{T}(\pi_k)+\epsilon_k$, where $\epsilon_k$ accounts for the practical optimization error (e.g., function approximation and optimization noise) at the $k$-th global step, where $k\geq0$. Then, the following bound holds:
	\begin{equation}
		d(\pi_{k+1},\pi^*)\leq(1-\alpha)^{k+1}d(\pi_0,\pi^*)+\sum_{i=0}^{k}(1-\alpha)^{k-i}\lVert\epsilon_i\rVert_1.
	\end{equation}
	Furthermore, assuming the optimization process eventually stabilizes such that the asymptotic error is bounded by $\limsup_{k\rightarrow \infty}\lVert\epsilon_{k}\rVert_1\leq \epsilon_{\infty}$, we have
	\begin{equation}
		\limsup_{k \rightarrow \infty} d(\pi_{k+1}, \pi^*) \leq \frac{\epsilon_{\infty}}{\alpha}.
	\end{equation}
}
\begin{proof}
	We aim to bound the expected $L_1$ distance between the student policy $\pi_{k+1}$ and the optimal target teacher policy $\pi^*$.
	
	\textbf{Step 1: Establishing the single-step recurrence relation.}
	
	According to the iterative update process defined in the theorem, for any global step $k \geq 0$, the updated policy is given by
	\begin{equation}
		\pi_{k+1} = \mathcal{T}(\pi_k) + \epsilon_k,
	\end{equation}
	where $\mathcal{T}:\Pi\to\Pi$ is the \textit{proximal teacher operator} defined in \eqref{proximal teacher operator}. By utilizing the triangle inequality of the expected $L_1$ distance metric $d(\cdot, \cdot)$, we decouple the distance between the updated policy $\pi_{k+1}$ and the target policy $\pi^*$ into two components:
	\begin{equation}\label{triangle inequality}
		d(\pi_{k+1}, \pi^*) \leq d(\mathcal{T}(\pi_k), \pi^*) + d(\pi_{k+1}, \mathcal{T}(\pi_k)).
	\end{equation}
	We analyze the second term on the right-hand side first. As defined in the main text, the proximal teacher is explicitly given by $\tilde{\pi}_k^*=\mathcal{T}(\pi_k)=\alpha\pi^*+(1-\alpha)\pi_k$. Therefore, the distance between the updated policy $\pi_{k+1}$ and the proximal teacher $\mathcal{T}(\pi_k)$ precisely corresponds to the expected $L_1$ norm of the optimization error at step $k$:
	\begin{equation}
		d(\pi_{k+1}, \mathcal{T}(\pi_k)) = d(\pi_{k+1}, \tilde{\pi}_k^*) = \lVert\epsilon_k\rVert_1.
	\end{equation}
	For the first term, we substitute the explicit definition of the proximal operator $\mathcal{T}(\pi_k)$. Utilizing the absolute homogeneity and linearity of the expected $L_1$ norm over $\mathcal{D}_x$, we obtain
	\begin{equation}
		\begin{split}
			d(\mathcal{T}(\pi_k), \pi^*)&=\mathbb{E}_{x\sim\mathcal{D}_x}\left[\lVert\alpha\pi^*(\cdot\mid x)+(1-\alpha)\pi_k(\cdot\mid x)-\pi^*(\cdot\mid x)\rVert_1\right]\\
			&=(1-\alpha)\mathbb{E}_{x\sim\mathcal{D}_x}\left[\lVert\pi_k(\cdot\mid x)-\pi^*(\cdot\mid x)\rVert_1\right]\\
			&=(1-\alpha)d(\pi_k,\pi^*).
		\end{split}
	\end{equation}
	Substituting both decoupled terms back into the triangle inequality \eqref{triangle inequality}, we establish the strict single-step recurrence relation for any step:
	\begin{equation}
		d(\pi_{k+1}, \pi^*) \leq(1-\alpha)d(\pi_k,\pi^*)+\lVert\epsilon_k\rVert_1.
	\end{equation}
	
	\textbf{Step 2: Forward unrolling of the recurrence relation.}
	
	To derive the explicit global bound for step $k+1$, we recursively unroll this single-step bound forward, starting from the initial step $k=0$.
	
	For $k=0$, the single-step recurrence gives
	\begin{equation}
		d(\pi_{1}, \pi^*) \leq(1-\alpha)d(\pi_0,\pi^*)+\lVert\epsilon_0\rVert_1.
	\end{equation}
	
	For $k=1$, we apply the recurrence and substitute the previously established bound for $d(\pi_{1}, \pi^*)$:
	\begin{equation}
		\begin{split}
			d(\pi_{2}, \pi^*) &\leq(1-\alpha)d(\pi_1,\pi^*)+\lVert\epsilon_1\rVert_1\\
			&\leq(1-\alpha)^2d(\pi_0,\pi^*)+(1-\alpha)\lVert\epsilon_0\rVert_1+\lVert\epsilon_1\rVert_1.
		\end{split}
	\end{equation}
	
	For $k=2$, we repeat the process by substituting the bound for $d(\pi_{2}, \pi^*)$:
	\begin{equation}
		\begin{split}
			d(\pi_{3}, \pi^*) &\leq(1-\alpha)d(\pi_2,\pi^*)+\lVert\epsilon_2\rVert_1\\
			&\leq(1-\alpha)^3d(\pi_0,\pi^*)+(1-\alpha)^2\lVert\epsilon_0\rVert_1+(1-\alpha)\lVert\epsilon_1\rVert_1+\lVert\epsilon_2\rVert_1.
		\end{split}
	\end{equation}
	Through rigorous algebraic expansion by repeating this substitution multiple times, a clear geometric pattern emerges. Accumulating these terms yields the closed-form finite-step global bound:
	\begin{equation}
		d(\pi_{k+1}, \pi^*)\leq(1-\alpha)^{k+1}d(\pi_0,\pi^*)+\sum_{i=0}^{k}(1-\alpha)^{k-i}\lVert\epsilon_i\rVert_1.
	\end{equation}
	
	\textbf{Step 3: Deriving the asymptotic convergence bound.}
	
	To determine the asymptotic behavior, we apply the limit supremum ($\limsup_{k \rightarrow \infty}$) to both sides of the inequality:
	\begin{equation}
		\limsup_{k \rightarrow \infty}d(\pi_{k+1}, \pi^*)\leq\limsup_{k \rightarrow \infty}\left\{(1-\alpha)^{k+1}d(\pi_0,\pi^*)+\sum_{i=0}^{k}(1-\alpha)^{k-i}\lVert\epsilon_i\rVert_1\right\}.
	\end{equation}
	First, the interpolation coefficient satisfies $\alpha \in (0, 1)$. Consequently, the term associated with the initial distance decays exponentially to zero:
	\begin{equation}
		\limsup_{k \rightarrow \infty}(1-\alpha)^{k+1}d(\pi_0,\pi^*)=0.
	\end{equation}
	Next, we analyze the accumulated error summation term. We are given the assumption that the asymptotic optimization error is bounded, i.e., $\limsup_{k\rightarrow \infty}\lVert\epsilon_{k}\rVert_1\leq \epsilon_{\infty}$. By the mathematical definition of the limit supremum, for any arbitrarily small constant $\delta > 0$, there exists a sufficiently large integer $N > 0$ such that for all $i \geq N$, the condition $\lVert\epsilon_i\rVert_1 \leq \epsilon_{\infty} + \delta$ strictly holds.
	
	Let $k\to\infty$, we now split the summation into two parts at this critical index $N$:
	\begin{equation}
		\sum_{i=0}^{k}(1-\alpha)^{k-i}\lVert\epsilon_i\rVert_1=\sum_{i=0}^{N-1}(1-\alpha)^{k-i}\lVert\epsilon_i\rVert_1+\sum_{i=N}^{k}(1-\alpha)^{k-i}\lVert\epsilon_i\rVert_1.
	\end{equation}
	As $k \to \infty$, the first term vanishes to zero. For the second term, we safely substitute the uniform bound $\lVert\epsilon_i\rVert_1 \leq \epsilon_{\infty} + \delta$ for all terms where $i \geq N$:
	\begin{equation}
		\sum_{i=N}^{k}(1-\alpha)^{k-i}\lVert\epsilon_i\rVert_1\leq(\epsilon_{\infty}+\delta)\sum_{i=N}^{k}(1-\alpha)^{k-i}.
	\end{equation}
	We introduce a change of variable by letting $j = k - i$. When $i = k$, $j=0$. When $i=N$, $j = k - N$. This transforms the summation into a standard geometric series:
	\begin{equation}
		(\epsilon_{\infty}+\delta)\sum_{i=N}^{k}(1-\alpha)^{k-i}=(\epsilon_{\infty}+\delta)\sum_{j=0}^{k-N}(1-\alpha)^{j}.
	\end{equation}
	Now we have
	\begin{equation}
		\limsup_{k\rightarrow \infty}\sum_{i=N}^{k}(1-\alpha)^{k-i}\lVert\epsilon_i\rVert_1\leq\limsup_{k\rightarrow \infty}(\epsilon_{\infty}+\delta)\sum_{j=0}^{k-N}(1-\alpha)^{j}=(\epsilon_{\infty}+\delta)\sum_{j=0}^{\infty}(1-\alpha)^{j}=\frac{\epsilon_{\infty}+\delta}{\alpha}.
	\end{equation}
	Because the choice of $\delta > 0$ is arbitrary and can be chosen to be infinitesimally small ($\delta \to 0$), the limit supremum of the entire sequence is strictly bounded by $\epsilon_{\infty}/\alpha$. Combining this bound with the vanishing first term, we have
	\begin{equation}
		\limsup_{k \rightarrow \infty}d(\pi_{k+1}, \pi^*)\leq\frac{\epsilon_{\infty}}{\alpha}.
	\end{equation}
	The proof is now complete.
\end{proof}

\subsection{Proof of Theorem \ref{performance lower bound}} \label{proof of performance lower bound}
\textbf{Theorem \ref{performance lower bound}.} \textit{
	Given Assumption \ref{assumption 2} and Definition \ref{normalized state distribution}, for any two policies, $\tilde{\pi}$ and $\pi$, the following bound holds:
	\begin{equation}
		\eta(\tilde{\pi})\geq\zeta_\pi(\tilde{\pi}) - 2\xi T_{\max}\ell_\pi\mathbb{E}_{s \sim d^{\pi}_{\mathrm{norm}}(\cdot)}\left[\mathcal{D}_{\mathrm{TV}}(\tilde{\pi}(\cdot \mid s),\pi(\cdot \mid s))\right],
	\end{equation}
	where $\zeta_\pi(\tilde{\pi})=\eta(\pi) + \ell_\pi\mathbb{E}_{s \sim d^\pi_{\mathrm{norm}}(\cdot), a \sim \tilde{\pi}(\cdot \mid s)} \left[ A^\pi(s, a) \right]$, and $\xi=\max_{s}\lvert\mathbb{E}_{a\sim\tilde{\pi}(\cdot\mid s)}\left[A^\pi(s,a)\right]\rvert$.
}
\begin{proof}
	To establish the monotonic improvement lower bound, we begin by bounding the absolute error between the true expected return $\eta(\tilde{\pi})$ and our constructed surrogate objective $\zeta_\pi(\tilde{\pi})$.
	
	First, we express the unnormalized performance difference:
	\begin{equation}
		\eta(\tilde{\pi}) = \eta(\pi)+\sum_{s \neq s_\bot} d^{\tilde{\pi}}(s) \sum_{a \in \mathcal{V}} \tilde{\pi}(a \mid s) A^\pi(s, a).
	\end{equation}
	$\zeta_\pi(\tilde{\pi})$ is anchored to the base policy's normalized state distribution. By applying Definition \ref{normalized state distribution}, where $d^\pi(s) = \ell_\pi d^\pi_{\mathrm{norm}}(s)$, we can unroll $\zeta_\pi(\tilde{\pi})$ back into the unnormalized measure space:
	\begin{equation}
		\zeta_\pi(\tilde{\pi}) = \eta(\pi) + \sum_{s \neq s_\bot} d^\pi(s) \sum_{a \in \mathcal{V}} \tilde{\pi}(a \mid s) A^\pi(s, a).
	\end{equation}
	We isolate the objective approximation error by subtracting $\zeta_\pi(\tilde{\pi})$ from $\eta(\tilde{\pi})$. By applying Hölder's Inequality, we decouple the global state distribution shift from the local expected advantage drift:
	\begin{equation}\label{decoupled objective approximation error}
		\lvert \eta(\tilde{\pi}) - \zeta_\pi(\tilde{\pi}) \rvert \leq \left( \sum_{s \neq s_\bot} \lvert d^{\tilde{\pi}}(s) - d^\pi(s) \rvert \right) \cdot \max_s \left\lvert \sum_{a \in \mathcal{V}} \tilde{\pi}(a \mid s) A^\pi(s, a) \right\rvert.
	\end{equation}
	The second decoupled term is precisely defined as our constant $\xi=\max_{s}\lvert\mathbb{E}_{a\sim\tilde{\pi}(\cdot\mid s)}\left[A^\pi(s,a)\right]\rvert$.
	
	Next, we rigorously bound the $L_1$ norm of the unnormalized global measure shift. Since the unnormalized measure is the sum of marginal state probabilities over time, we expand this across the maximum generation horizon $T_{\max}$ defined in Assumption \ref{assumption 2}, and apply the triangle inequality:
	\begin{equation}
		\sum_{s \neq s_\bot} \lvert d^{\tilde{\pi}}(s) - d^\pi(s) \rvert \leq \sum_{t=1}^{T_{\max}} \sum_{s \neq s_\bot} \lvert \mathbb{P}(s_t = s \mid \tilde{\pi}) - \mathbb{P}(s_t = s \mid \pi) \rvert.
	\end{equation}
	At the first generation step $t=1$, the initial state $s_1$ is strictly defined by the input prompt $x \sim \mathcal{D}_x$ and the special initial token $y_0$. Because this initial distribution is completely independent of the policy, the marginal probability distance is exactly zero:
	\begin{equation}
		\sum_{s \neq s_\bot} \lvert \mathbb{P}(s_1 = s \mid \tilde{\pi}) - \mathbb{P}(s_1 = s \mid \pi) \rvert = 0.
	\end{equation}
	For any subsequent generation step $t > 1$, we construct a telescoping sum to bound the difference in marginal probabilities. Let $\mathbb{P}_i$ denote a \textit{hybrid trajectory distribution} where the first $i$ steps are generated by the behavior policy $\pi$, and the remaining steps from $i+1$ to $t-1$ are generated by the target policy $\tilde{\pi}$. Note that $\mathbb{P}_{t-1}$ represents full generation under $\pi$, and $\mathbb{P}_0$ represents full generation under $\tilde{\pi}$. The telescoping sum bridging these two extreme distributions is given by
	\begin{equation}
		\mathbb{P}(s_t = s \mid \tilde{\pi}) - \mathbb{P}(s_t = s \mid \pi)=\mathbb{P}_0-\mathbb{P}_{t-1}=-\sum_{i=1}^{t-1} \left( \mathbb{P}_i(s_t = s) - \mathbb{P}_{i-1}(s_t = s) \right).
	\end{equation}
	Taking the absolute value and applying the triangle inequality yields
	\begin{equation}
		\sum_{s \neq s_\bot}\lvert\mathbb{P}(s_t = s \mid \tilde{\pi}) - \mathbb{P}(s_t = s \mid \pi)\rvert\leq\sum_{s \neq s_\bot}\sum_{i=1}^{t-1} \lvert\mathbb{P}_i(s_t = s) - \mathbb{P}_{i-1}(s_t = s)\rvert.
	\end{equation}
	Observe that the adjacent hybrid distributions $\mathbb{P}_i$ and $\mathbb{P}_{i-1}$ share the identical generation history up to step $i-1$ (governed by $\pi$) and share the identical rollout strategy from step $i+1$ onward (governed by $\tilde{\pi}$). They diverge only at the specific decision step $i$.
	
	In our autoregressive generation framework, the environmental transitions are purely deterministic (appending the generated token to the prefix). Consequently, summing over all possible future states $s_t$ effectively marginalizes them out, collapsing the trajectory distance precisely to the expected $L_1$ divergence of the action distributions at the singular branching step $i$:
	\begin{equation}
		\sum_{s \neq s_\bot}\lvert\mathbb{P}_i(s_t = s) - \mathbb{P}_{i-1}(s_t = s)\rvert \leq \mathbb{E}_{s_i \sim \mathbb{P}(s_i \mid \pi)}\left[ \lVert\tilde{\pi}(\cdot \mid s_i) - \pi(\cdot \mid s_i)\rVert_1 \right].
	\end{equation}
	Substituting this into our bound and summing over the entire telescoping sequence from $i=1$ to $t-1$, we obtain the marginal probability distance for a specific step $t$:
	\begin{equation}
		\sum_{s \neq s_\bot}\lvert\mathbb{P}(s_t = s \mid \tilde{\pi}) - \mathbb{P}(s_t = s \mid \pi)\rvert \leq 2\sum_{i=1}^{t-1} \mathbb{E}_{s_i \sim \mathbb{P}(s_i \mid \pi)}\left[\mathcal{D}_{\mathrm{TV}}(\tilde{\pi}(\cdot \mid s_i), \pi(\cdot \mid s_i))\right].
	\end{equation}
	As established earlier, the initial state distribution at $t=1$ is strictly fixed by the prompt and the special initial token, yielding zero divergence. Therefore, we sum this temporal bound over the valid response steps from $t=2$ to the maximum generation horizon $T_{\max}$ to measure the global shift:
	\begin{equation}
		\sum_{s \neq s_\bot} \lvert d^{\tilde{\pi}}(s) - d^\pi(s) \rvert \leq 2\sum_{t=2}^{T_{\max}}\sum_{i=1}^{t-1} \mathbb{E}_{s_i \sim \mathbb{P}(s_i \mid \pi)}\left[\mathcal{D}_{\mathrm{TV}}(\tilde{\pi}(\cdot \mid s_i), \pi(\cdot \mid s_i))\right].
	\end{equation}
	To efficiently decouple this complex temporal accumulation, we rearrange the nested summations. By exchanging the order of summation between the time steps, we observe that the expected divergence at any step $i$ is repeatedly accumulated exactly $(T_{\max} - i)$ times:
	\begin{equation}
		\begin{split}
			&\sum_{t=2}^{T_{\max}}\sum_{i=1}^{t-1} \mathbb{E}_{s_i \sim \mathbb{P}(s_i \mid \pi)}\left[\mathcal{D}_{\mathrm{TV}}(\tilde{\pi}(\cdot \mid s_i), \pi(\cdot \mid s_i))\right]\\ =& \sum_{i=1}^{T_{\max}-1} (T_{\max} - i) \mathbb{E}_{s_i \sim \mathbb{P}(s_i \mid \pi)}\left[\mathcal{D}_{\mathrm{TV}}(\tilde{\pi}(\cdot \mid s_i), \pi(\cdot \mid s_i))\right]\\
			\leq& T_{\max} \sum_{i=1}^{T_{\max}} \sum_{s \neq s_\bot} \mathbb{P}(s_i = s \mid \pi) \mathcal{D}_{\mathrm{TV}}(\tilde{\pi}(\cdot \mid s), \pi(\cdot \mid s)).\\
			=&T_{\max} \sum_{s \neq s_\bot} \mathcal{D}_{\mathrm{TV}}(\tilde{\pi}(\cdot \mid s), \pi(\cdot \mid s)) \left( \sum_{i=1}^{T_{\max}} \mathbb{P}(s_i = s \mid \pi) \right).\\
		\end{split}
	\end{equation}
	Here, we relax this sum by bounding the varying coefficient $(T_{\max} - i)$ with the uniform maximum horizon $T_{\max}$ and expand the expectation into a summation over specific states.
	
	Since the unnormalized state measure $d^\pi(s)$ intrinsically aggregates the expected distribution across all steps by definition, i.e., $d^\pi(s) = \sum_{t=1}^{\infty}\mathbb{P}(s_t = s \mid \pi) \geq \sum_{i=1}^{T_{\max}} \mathbb{P}(s_i = s \mid \pi)$, we can concisely upper-bound the relaxed sequence, so
	\begin{equation}
		\sum_{s \neq s_\bot} \lvert d^{\tilde{\pi}}(s) - d^\pi(s) \rvert \leq 2T_{\max} \sum_{s \neq s_\bot} d^\pi(s)\mathcal{D}_{\mathrm{TV}}(\tilde{\pi}(\cdot \mid s), \pi(\cdot \mid s)).
	\end{equation}
	Recalling Definition \ref{normalized state distribution}, we substitute the unnormalized measure with $d^\pi(s) = \ell_\pi d^\pi_{\mathrm{norm}}(s)$, gracefully transforming the summation into a formal expectation over the normalized state distribution:
	\begin{equation}
		\sum_{s \neq s_\bot} \lvert d^{\tilde{\pi}}(s) - d^\pi(s) \rvert \leq 2T_{\max}\ell_\pi\mathbb{E}_{s \sim d^{\pi}_{\mathrm{norm}}(\cdot)}\left[\mathcal{D}_{\mathrm{TV}}(\tilde{\pi}(\cdot \mid s),\pi(\cdot \mid s))\right].
	\end{equation}
	Finally, substituting this global measure shift bound back into our decoupled objective approximation error \eqref{decoupled objective approximation error}, we obtain the absolute bound for the objective difference:
	\begin{equation}
		\lvert \eta(\tilde{\pi}) - \zeta_\pi(\tilde{\pi}) \rvert \leq 2\xi T_{\max}\ell_\pi\mathbb{E}_{s \sim d^{\pi}_{\mathrm{norm}}(\cdot)}\left[\mathcal{D}_{\mathrm{TV}}(\tilde{\pi}(\cdot \mid s),\pi(\cdot \mid s))\right],
	\end{equation}
	which implies that
	\begin{equation}
		\eta(\tilde{\pi})\geq\zeta_\pi(\tilde{\pi}) - 2\xi T_{\max}\ell_\pi\mathbb{E}_{s \sim d^{\pi}_{\mathrm{norm}}(\cdot)}\left[\mathcal{D}_{\mathrm{TV}}(\tilde{\pi}(\cdot \mid s),\pi(\cdot \mid s))\right].
	\end{equation}
	The proof is now complete.
\end{proof}

%%%%%%%%%%%%%%%%%%%%%%%%%%%%%%%%%%%%%%%%%%%%%%%%%%%%%%%%%%%%

\end{document}